\newtheorem{problem}{Problem}
\newtheorem{assumption}{Assumption}
\title[Harnessing Data from Clustered LQR Systems]{Harnessing Data from Clustered LQR Systems: Personalized and Collaborative Policy Optimization}
\author{%
 \Name{Vinay Kanakeri} \Email{vkanake@ncsu.edu}\\
 \addr Department of Electrical and Computer Engineering, North Carolina State University
 \AND
 \Name{Shivam Bajaj} \Email{bajaj41@purdue.edu}\\
 \addr The Elmore Family School of Electrical and Computer Engineering, Purdue University%
 \AND
 \Name{Ashwin Verma} \Email{verma240@purdue.edu}\\
 \addr The Elmore Family School of Electrical and Computer Engineering, Purdue University%
 \AND
 \Name{Vijay Gupta} \Email{gupta869@purdue.edu}\\
 \addr The Elmore Family School of Electrical and Computer Engineering, Purdue University%
 \AND
 \Name{Aritra Mitra} \Email{amitra2@ncsu.edu}\\
 \addr Department of Electrical and Computer Engineering, North Carolina State University %
}
\newcommand{\abs}[1]{\lvert #1 \rvert}
\newcommand{\abslr}[1]{\left\lvert#1\right\rvert}
\newcommand{\normlr}[1]{\left\lVert#1\right\rVert}
\DeclareMathOperator*{\argmin}{\arg\!\min}
\DeclarePairedDelimiterX{\norm}[1]{\lVert}{\rVert}{#1}
\definecolor{mygreen}{rgb}{0.0, 0.5, 0.0}
\definecolor{winered}{rgb}{0.8,0,0}
\definecolor{myblue}{rgb}{0,0,0.8}
\begin{document}

\maketitle

\begin{abstract}%
It is known that reinforcement learning (RL) is data-hungry. To improve sample-efficiency of RL, it has been proposed that the learning algorithm utilize data from `approximately similar' processes. However, since the process models are unknown, identifying which other processes are similar poses a challenge. In this work, we study this problem in the context of the benchmark Linear Quadratic Regulator (LQR) setting. Specifically, we consider a setting with multiple agents, each corresponding to a copy of a linear process to be controlled. The agents' local processes can be partitioned into clusters based on similarities in dynamics and tasks. Combining ideas from sequential elimination and zeroth-order policy optimization, we propose a new algorithm that performs simultaneous clustering and learning to output a {\bf personalized policy} (controller) for each cluster. Under a suitable notion of cluster separation that captures differences in closed-loop performance across systems, we prove that our approach guarantees correct clustering with high probability. Furthermore, we show that the sub-optimality gap of the policy learned for each cluster scales inversely with the size of the cluster, with no additional bias, unlike in prior works on collaborative learning-based control. Our work is the first to reveal how clustering can be used in data-driven control to learn personalized policies that enjoy statistical gains from collaboration but do not suffer sub-optimality due to inclusion of data from dissimilar processes. From a distributed implementation perspective, our method is attractive as it incurs only a mild logarithmic communication overhead. 
\end{abstract}

\begin{keywords}%
  Policy gradients for LQR; Collaborative Learning; Transfer/Multi-Task Learning.
\end{keywords}
\vspace{-3mm}
\section{Introduction}
\vspace{-1mm}
\label{sec:Intro}
The last decade or so has seen a surge of interest in \emph{model-free data-driven control}~\citep{hu2023toward}, where control laws (policies) are learned directly from data, bypassing the need to estimate the system model as an intermediate step. Although such a framework is promising, it relies on the availability of adequate data to learn high-precision policies. Unfortunately, however, data from physical processes (such as real-world robotic environments) could be \emph{scarce} and/or difficult to collect. Drawing inspiration from popular paradigms such as federated and meta-learning, some recent papers~\citep{zhang2023multi, wang2023model, toso2024meta} have attempted to mitigate this challenge by exploring the idea of combining information generated by multiple environments, where each environment represents a dynamical system with an associated cost performance metric that captures a task or a goal. The unifying theme in such papers is to learn \emph{a single common policy} that performs well across \emph{all} environments by minimizing an average-cost performance metric. When environments differ considerably in their tasks, such a single common policy might incur highly sub-optimal performance on any given environment. More fundamentally, when environments differ in dynamics, \emph{even the existence of a common stabilizing policy is unclear and difficult to verify in the absence of models.} Departing from the approach of learning a single common policy, in this paper, we ask: \emph{(When) is it possible to learn \textbf{personalized policies} in a sample-efficient way by leveraging data generated by potentially non-identical dynamical processes?} 

To formalize our study, we consider a scenario involving multiple agents that can be partitioned into distinct clusters. We assume that all agents within a given cluster interact with the same physical environment modeled as a linear time-variant (LTI) system with \emph{unknown} dynamics; furthermore, all agents within a cluster share the same quadratic cost function. However, the dynamics and cost functions across clusters can be \emph{arbitrarily different.} Thus, our setting captures both similarities and differences in dynamics and tasks. As is common in collaborative and federated learning, we allow agents to exchange information via a central aggregator. Concretely, our problem of interest is to learn a \emph{personalized} policy for each cluster that enjoys the \emph{benefits of collaboration}, i.e., we wish to show that such a policy can be learned faster (relative to a single-agent setting) by using the collective samples available within the cluster. However, this is challenging, as we explain below. 

\textbf{Challenges.} To make our setting realistic, we assume that the cluster structure is unknown a priori. Since the system models associated with the clusters are also unknown, it becomes difficult to decide how information should be exchanged between agents. In particular, care needs to be taken to avoid misclustering, since transfer of information across clusters \emph{with arbitrarily different LTI systems} can lead to the learning of destabilizing policies; thus, in our setting, \emph{more data can potentially hurt if not used judiciously.} Additional subtleties arise as the agents in our setting access only noisy zeroth-order information for both clustering and learning policies; we discuss them in Sections~\ref{sec:ProbForm} and~\ref{sec:algo}. In light of these challenges, the main \textbf{contributions} of this paper are as follows. 

$\bullet$ \textbf{Problem Formulation.} While clustering has been explored in federated learning (FL) for static supervised learning tasks~\citep{ghoshNIPs}, our work provides the first principled study of clustering in the context of model-free data-driven control, and shows how such a formalism can enable learning personalized policies in a sample-efficient manner. As part of our formulation, we identify a dissimilarity metric $\Delta$ (see~\eqref{eqn:het}) that captures differences in optimal costs between clusters. Our results reveal that a larger value of $\Delta$ leads to a faster separation of clusters. 

$\bullet$ \textbf{Novel Algorithm.} The primary contribution of this paper is the development of a novel \emph{model-free} Personalized and Collaborative Policy Optimization (PO) algorithm (Algorithm~\ref{algo:PCPO}) called \texttt{PCPO} that combines ideas from \emph{sequential elimination} in multi-armed bandits~\citep{even2006action} and policy gradient algorithms in reinforcement learning (RL)~\citep{agarwal2021theory}. The lack of prior knowledge of the cluster-separation gap $\Delta$ motivates the need for a sequential elimination strategy to identify the clusters. Moreover, since it is non-trivial to decide when to stop clustering and start collaborating, we propose an epoch-based approach that involves clustering and collaboration \emph{in every epoch} by requiring agents to maintain two separate sequences of policies: local policies used purely for sequential clustering and global policies for collaboration. Another key feature of our algorithm is that it only incurs a mild communication cost that scales logarithmically with the number of agents and samples, making it particularly appealing for distributed implementation.

$\bullet$ \textbf{Collaborative Gains.} In Theorem~\ref{thm:clustering}, we prove that with high probability, our approach leads to correct clustering, despite the absence of prior knowledge of models, cluster structure, and separation gap $\Delta.$ This result also reveals that more \emph{heterogeneity can actually aid the learning process} in that a larger $\Delta$ incurs fewer noisy function evaluations for correct clustering. Building on Theorem~\ref{thm:clustering}, our main result in Theorem~\ref{thm:collaborative_optimization} proves that by using \texttt{PCPO}, each agent can learn a near-optimal personalized policy for its own system with a sub-optimality gap that scales inversely with the number of agents in its cluster. In other words, \texttt{PCPO} prevents negative transfer of information across clusters, while ensuring sample-complexity reductions via collaboration within each cluster. To our knowledge, this is the first result to show how data from heterogeneous dynamical systems admitting a cluster structure can be harnessed to expedite the learning of personalized policies. 

Since this is a preliminary investigation of clustering in data-driven control, we restrict our attention to the canonical Linear Quadratic Regulator (LQR) formalism. That said, we anticipate that our general algorithmic template can be used in other supervised or RL problems. 

\textbf{Related Work.} To put our contributions into perspective, we discuss relevant literature below. 

$\bullet$ \textbf{Policy Gradient for LQR.} We build on the rich set of results on policy gradient methods for the LQR problem~\citep{fazel, malik2020derivative, gravell2020learning, zhang2021policy, mohammadi2019global, mohammadi2020linear, hu2023toward, moghaddam2025sample}. Generalizing these results from the single system setting to our clustered multi-system formulation introduces various nuances and challenges (outlined in Sections~\ref{sec:ProbForm} and~\ref{sec:algo}) that we address in this paper. 

$\bullet$ \textbf{Personalized Federated Learning.} We draw inspiration from the work on clustering in FL~\citep{ghoshNIPs, ghoshTIT, sattler2020} that aims to learn personalized models for groups of agents that are similar in terms of their data distributions. Despite cosmetic similarities, the specifics of our setting differ significantly in that we focus on control of dynamical systems where stability plays a crucial role; no such stability concerns arise in the FL papers above on supervised learning. 



$\bullet$ \textbf{Collaborative System Identification.} Our paper is related to a growing body of work that seeks to leverage data from multiple dynamical systems to achieve statistical gains in estimation accuracy. In this context, several papers~\citep{wang2023fedsysid, toso2023learning, fabiolti, modi2024joint, rui2025learning, tupe2025federated, xin2025learning} have explored collaborative system identification by combining trajectory data from multiple systems that share structural similarities. In particular,~\cite{toso2023learning} and~\cite{rui2025learning} assume a cluster structure like us. While the above papers focus on using collective data for an \emph{open-loop} estimation problem, namely system-identification, our work focuses instead on data-efficient closed-loop control by directly learning policies. As such, our notion of heterogeneity captures differences in closed-loop performance across clusters as opposed to similarity metrics imposed on open-loop system matrices in the papers above. 


$\bullet$ \textbf{Meta, Multi-Task, and Transfer Learning in Control.} Under the umbrella framework of meta and transfer learning, various recent papers~\citep{wang2023model, toso2024meta, aravind2024moreau, stamouli2025policy} have used PO methods to study how information from multiple LTI systems can be aggregated to learn policies that adapt across similar systems. Our formulation, which seeks to find a \emph{personalized} policy for every system, departs fundamentally from this line of work which instead aims to learn a common policy for all systems. In this regard, we note that the closely related papers of~\cite{wang2023model} and~\cite{toso2024meta} need to assume that \emph{all} the systems are sufficiently similar to admit a common stabilizing set. Even under this restrictive assumption, the results in these papers indicate that the sub-optimality gap exhibits an additive heterogeneity-induced bias term that might negate the speedups from collaboration. In contrast, our work does not require a common stabilizing policy to exist for systems across clusters. Furthermore, our personalization approach completely eliminates heterogeneity-induced biases. We also note that our approach incurs a logarithmic (in agents and samples) communication cost as opposed to the linear cost in~\cite{wang2023model}. Finally, complementary to our clustering-based approach, ideas from representation learning~\citep{zhang2023multi, guo2023imitation, lee2025regret} and domain randomization~\citep{fujinami2025domain} have also been recently used to improve data-efficiency in dynamic control tasks.  
\vspace{-4mm}
\section{Problem Formulation}
\vspace{-1mm}
\label{sec:ProbForm}
We consider a setting with $N$ agents partitioned into $H$ disjoint clusters $\{\mathcal{M}_j\}_{j \in [H]}.$ With each cluster $j \in [H]$, we associate a tuple $\mathcal{S}_j=(A_j, B_j, Q_j, R_j)$, comprising a system matrix $A_j \in \mathbb{R}^{n \times n}$, a control input matrix $B_j \in \mathbb{R}^{n \times m}$, and two positive definite matrices $Q_j \in \mathbb{R}^{n \times n}$, $R_j \in \mathbb{R}^{m \times m}$ that define the LQR cost function for cluster $j$. Each agent in cluster $j$ interacts with the \emph{same instance} of the LQR problem specified by $\mathcal{S}_j$, and aims to find a linear policy of the form $u_t = - K x_t$ that minimizes the following infinite-horizon discounted cost:
\begin{equation}
\vspace{-1mm}
C_j(K) = \mathbb{E} \left[ \sum_{t=0}^{\infty} \gamma^t \left(x^{\top}_t Q_j x_t +  u^{\top}_t R_j u_t\right)\right] \hspace{3mm} \textrm{subject to} \hspace{3mm} x_{t+1}=A_j x_t + B_j u_t + z_t,
\label{eqn:LQR}
\vspace{-1mm}
\end{equation}
where $x_0 = 0$ and $x_t$, $u_t$, and $z_t$ are the state, control input (action), and exogenous process noise, respectively, at time $t$, $\gamma \in (0,1)$ is a discount factor, and $K$ is a control gain matrix. We make the standard assumption that the pair $(A_j, B_j)$ is controllable for every $j \in [H]$. Following~\cite{malik2020derivative}, we assume that $z_t$ is sampled independently from a distribution $\mathcal{D}$, such that:
\begin{equation}
\mathbb{E}[z_t]=0, \hspace{2mm} \mathbb{E}[z_t z^{\top}_t]= I, \hspace{2mm} \textrm{and} \hspace{2mm} \Vert z_t \Vert^2_2 \leq B, \forall t,
\end{equation}
where $B > 0$ is some positive constant. For the LQR problem described in~\eqref{eqn:LQR}, it is well known~\citep{bertsekas2015dynamic} that the optimal control law is a linear feedback policy of the form $u_t = -K^*_j x_t$, where $K^*_j$ is the optimal control gain matrix for cluster $j$. When $\mathcal{S}_j$ is known, each agent in $\mathcal{M}_j$ can obtain $K^*_j$ 
by solving the discrete-time algebraic Riccati equation (DARE)~\citep{anderson2007optimal}. 

However, our interest is in the learning scenario where the system matrices $\{(A_j, B_j)\}_{j \in [H]}$ are \emph{unknown} to the agents. Even in this setting, it is known that policy optimization (PO) algorithms that treat the control gain as the optimization variable converge to the optimal policy~\citep{fazel, malik2020derivative}. The implementation of such algorithms relies on noisy \emph{trajectory rollouts} to compute estimates of policy gradients.\footnote{The notion of a rollout will be made precise later in this section.} Specifically, given $T$ independent rollouts from the tuple $\mathcal{S}_j$, each agent within $\mathcal{M}_j$ can generate a gain $\hat{K}$ such that with high probability, $C_j(\hat{K}) - C_j(K^*_j) \leq \tilde{O}(1/\sqrt{T})$~\citep{malik2020derivative}. Our goal is to investigate whether this sample-complexity bound can be improved by leveraging the cluster structure in our problem.  

To achieve potential gains in sample-complexity via collaboration, we allow the agents to communicate via a central server, and make the following assumption that is common in the literature on collaborative/federated learning~\citep{konevcny, mcmahan}. 
\vspace{-0.5mm}
\begin{assumption}
\label{ass:independence}
The noise processes across agents are statistically independent, i.e., for all $i_1, i_2 \in [N]$ such that $i_1 \neq i_2$, the noise stochastic processes $\{z^{(i_1)}_t\}$ and $\{z^{(i_2)}_t\}$ are mutually independent. Here, with a slight overload of notation, we use $\{z^{(i)}_t\}$ to denote the noise process for agent $i \in [N]$. 
\end{assumption}
\vspace{-0.5mm}
Although the above assumption suggests that exchange of information between agents can accelerate the learning of an optimal policy,  collaboration is complicated by the \emph{heterogeneity} among clusters, due to the difference in system dynamics $(A_j,B_j)$ and in task objectives $(Q_j,R_j)$. To capture such heterogeneity across clusters, we take inspiration from the notion of ``cluster separation gaps" in supervised learning~\citep{ghoshTIT, su2022global, sattler2020}, and introduce 
\begin{equation}
    \Delta \coloneqq \min_{j_1, j_2 \in [H]:j_1 \neq j_2} \vert C_{j_1}(K^*_{j_1}) - C_{j_2}(K^*_{j_2}) \vert.
\label{eqn:het}
\end{equation}

We assume a non-zero separation across clusters, i.e., $\Delta >0$. While one can certainly formulate alternative notions of heterogeneity, we will show (in Theorem~\ref{thm:collaborative_optimization}) that our metric of cluster-dissimilarity, as captured by $\Delta$ in~\eqref{eqn:het}, can be suitably exploited to separate clusters and learn personalized policies for each cluster. In particular, our results reveal that \emph{heterogeneity can be helpful:} a larger value of $\Delta$ leads to faster cluster separation using fewer rollouts. With the above ideas in place, we are now ready to formally state our problem of interest. For each agent $i \in [N]$, let us use $\sigma(i)$ to represent the index of the cluster to which it belongs. 
 \begin{tcolorbox}[colback=blue!5!white,colframe=blue,breakable]
\begin{problem} 
\label{prob:ClustLQR}
(\textbf{Clustered LQR Problem}) Let $\delta \in (0,1)$ be a given failure probability. Suppose every agent $i \in [N]$ has access to $T$ independent rollouts from its corresponding system $\mathcal{S}_{\sigma(i)}.$ Develop an algorithm that returns $\{\hat{K}_i\}_{i \in [N]}$ such that with probability at least $1-\delta$, the following is true $\forall i \in [N]$: 
$$C_{\sigma(i)} (\hat{K}_i) - C_{\sigma(i)}(K^*_{\sigma(i)}) \leq \tilde{O}\left( \frac{1}{\sqrt{| \textcolor{winered}{\mathcal{M}_{\sigma(i)}}| T}} \right).$$
\end{problem}
\end{tcolorbox} 
In simple words, our goal is to come up with an algorithm that generates a personalized control policy for every agent that benefits from the collective information available within that agent's cluster. This is quite non-trivial due to the following technical \textbf{challenges}. 

$\bullet $ In our setting, the system dynamics and the cluster identities are both unknown a priori. Thus, our problem requires learning the cluster identities and optimal policies simultaneously. 

$\bullet$ The clustering process is complicated by two  main issues. First, the information used for clustering is based on noisy function evaluations that are insufficient for estimating the system models, ruling out system-identification-based approaches in~\cite{toso2023learning} and~\cite{rui2025learning}. Thus, we need to develop a \emph{model-free} clustering algorithm. Second, the minimum separation gap $\Delta$ in~\eqref{eqn:het} is assumed to be unknown, ruling out the possibility of simple one-shot clustering approaches. 

$ \bullet $ Unlike supervised learning problems in~\cite{ghoshNIPs} and~\cite{su2022global} where misclustering only introduces a bias due to heterogeneity, the price of misclustering can be more severe in our control setting. In particular, since the system tuples across clusters are allowed to be \emph{arbitrarily different}, transfer of information across clusters can lead to destabilizing policies. 

In the next section, we will develop the \texttt{PCPO} algorithm that addresses the above challenges and solves Problem~\ref{prob:ClustLQR}, while incurring only a logarithmic (with respect to the number of agents and rollouts) communication cost. In preparation for the next section, we now define the notion of a \emph{rollout}  and a zeroth-order gradient estimator. Given a policy $K$, a rollout for an agent $i \in \mathcal{M}_j$ yields a \emph{noisy sample} of the infinite-horizon trajectory cost, defined as:
\begin{equation}
\label{eqn:rollout}
    C_j(K; \mathcal{Z}^{(i)}) = \sum_{t=0}^{\infty} \gamma^t \left(x^{\top}_t Q_j x_t +  u^{\top}_t R_j u_t\right), \hspace{1mm} \textrm{where} \hspace{1mm} x_{t+1}=A_j x_t + B_j u_t + z_t, u_t = -Kx_t,  
\end{equation}
 $x_0 = 0$ and $\mathcal{Z}^{(i)} = \{z_t^{(i)}\}$. We will interpret each rollout as a sample. Using such noisy function evaluations for a policy $K$ run by an agent $i \in \mathcal{M}_j$, we define the $M$-minibatched zeroth-order gradient estimator with a smoothing radius $r$, as follows~\citep{fazel, malik2020derivative}:
\begin{equation}
\label{eqn:zero-order-grad}
    g_i(K) \coloneqq \frac{1}{M} \sum_{k=1}^{M} C_j(K + rU_k; \mathcal{Z}_k^{(i)})\left(\frac{D}{r}\right)U_k,
\end{equation}
where $D = mn$, $U_k$ is drawn independently from a uniform distribution over matrices with unit Frobenius norm, and $\mathcal{Z}_k^{(i)}$ are independent copies of $\mathcal{Z}^{(i)}$ for all $k \in [M]$. In the sequel, for an agent $i \in \mathcal{M}_j$, we use the shorthand $\texttt{ZO}_i(K, M, r)$ to refer to the $M$-minibatched zeroth-order gradient estimator at policy $K$ with smoothing radius $r$, as defined in \eqref{eqn:zero-order-grad}. We use the notation $c_{(p, \_)}$ to denote problem-parameter-dependent constants and provide their expressions in Appendix~\ref{app:properties} of \cite{L4DC26}. We make the standard assumption \citep{fazel, malik2020derivative} that each agent $i$ has access to an initial controller $K_i^{(0)}$ that lies within its respective stabilizing set: $\{K \in \mathbb{R}^{m \times n}: \rho(A_{\sigma(i)} - B_{\sigma(i)}K) < 1\}$, where $\rho(X)$ is the spectral radius of a matrix $X \in \mathbb{R}^{n \times n}$.
\vspace{-3mm}
\section{Description of the Algorithm}
\vspace{-1mm}
\label{sec:algo}
In this section, we present our proposed algorithm, Personalized and Collaborative Policy Optimization (\texttt{PCPO}) (Algorithm~\ref{algo:PCPO}), which effectively addresses Problem~\ref{prob:ClustLQR} by carefully accounting for its inherent challenges. Since the cluster separation gap $\Delta$ is unknown, we propose a \emph{sequential elimination} strategy to identify the correct clusters. While the idea of sequential elimination has been explored in the context of multi-armed bandits \citep{even2006action}, we show that a similar approach can be used to effectively cluster LTI systems that satisfy the heterogeneity metric defined in \eqref{eqn:het}. The algorithm proceeds in epochs (indexed by $l$) of increasing duration, where in each epoch, each agent $i$ updates two sequences: a local sequence, $\{X_i^{(l)}\}_{l\geq 0}$, and a global sequence, $\{\hat{K}_i^{(l)}\}_{l \geq 0}$. The local sequence is updated using the zeroth-order policy optimization algorithm in \cite{fazel, malik2020derivative}, and is used exclusively for clustering the agents.  The global sequence is updated by aggregating the gradient estimates from all the agents within an appropriately defined neighborhood set. After each epoch, the neighborhood sets are updated based on the concentration of the estimated cost around the optimal cost. As the number of rollouts increases across epochs, this concentration becomes tighter, hence pruning out misclustered agents over successive epochs. The various components of the algorithm are succinctly captured in Figure~\ref{fig:algo}. In Section~\ref{sec:results}, we show that the neighborhood sets eventually converge to the correct clusters with high probability, after which the global sequence enjoys the collaborative gains without any heterogeneity bias. In what follows, we elaborate on the rationale behind the design of the various components of \texttt{PCPO}.

\begin{figure}[t]
\centering
  \includegraphics[scale =0.425]{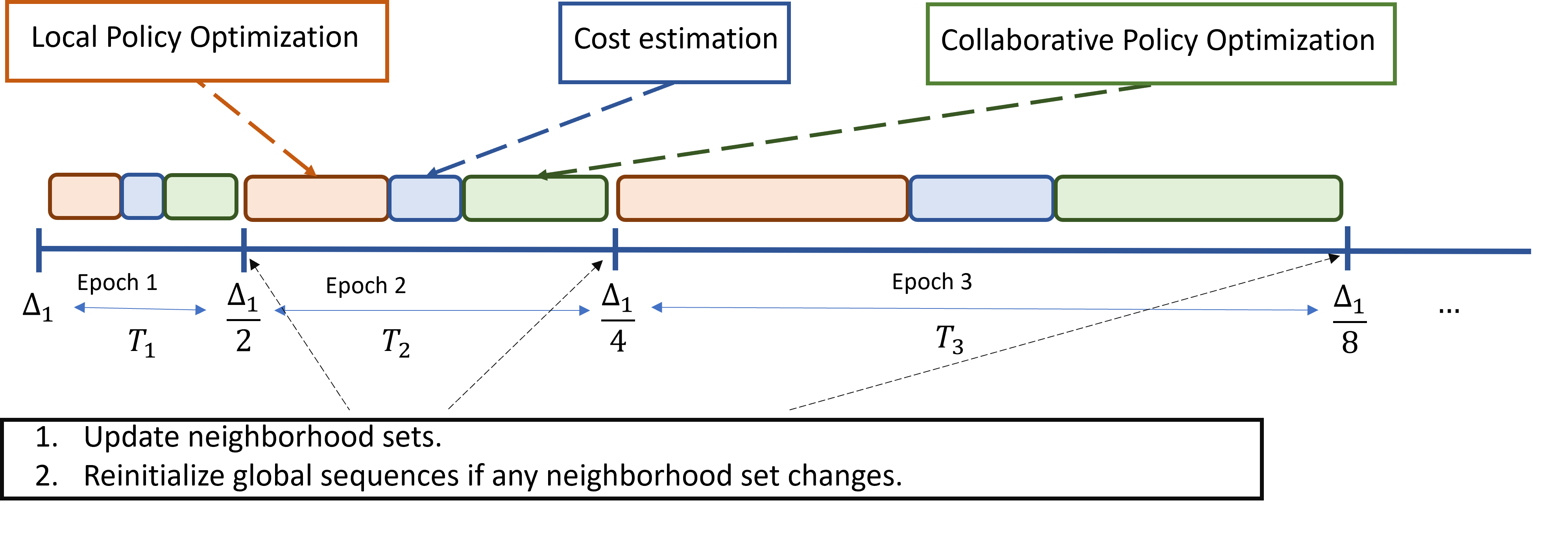}
  \vspace{-6mm}
  \caption{Illustration of the epoch-based structure of \texttt{PCPO}, where each epoch involves three key steps: local policy optimization (PO), cost estimation, and global PO.}
  \label{fig:algo}
\vspace{-8mm}
\end{figure} 
\begin{algorithm}[t]
\caption{Personalized and Collaborative Policy Optimization (\texttt{PCPO})}
\label{algo:PCPO}  
\begin{algorithmic}[1] 
\State \textbf{Initialization:} $\Delta_0$; $\forall i \in [N]$, $\hat{K}_i^{(0)} \leftarrow K_{i}^{(0)}, X_i^{(0)} \leftarrow K_{i}^{(0)},  \mathcal{N}_i^{(0)} \leftarrow [N]$.
\State \textbf{For} \hspace{0.25mm} {$l = 1, 2, \ldots, $} \hspace{0.25mm}
\State \quad \textbf{At Each Agent $i$:} $\Delta_l \leftarrow \frac{\Delta_{l-1}}{2},$ $\delta_l \leftarrow \frac{\delta}{2l^2}, \eta \leftarrow c_{(p, 1)},$ $R_l \leftarrow c_{(p, 2)} \log\left(\frac{c_{(p, 3)}N}{\Delta_l^2}\right)$
\Statex \quad $M_l \leftarrow \frac{c_{(p, 4)}}{\Delta_l^2}\log\left(\frac{8DNR_l}{\delta_l}\right),$ $\tilde{r}_l \leftarrow \left(\frac{c_{(p, 5)}}{\sqrt{M_l}} \sqrt{\log\left(\frac{8DNR_l}{\delta_l}\right)}\right)^{1/2}, r_l^{(\text{loc})} \leftarrow \min\{c_{(p, 6)}, \tilde{r}_l\}.$
\State \quad \textcolor{myblue}{Local Policy Optimization}: $X_i^{(l)} \leftarrow  \texttt{localPO}(X_i^{(l-1)}, M_l, R_l, r_l^{(\text{loc})}).$
\State \quad \textcolor{myblue}{Cost estimation}: $\hat{C}_{\sigma(i)}(X_i^{(l)}) \leftarrow \frac{1}{M_l} \sum_{j=1}^{M_l} C_{\sigma(i)}(X_i^{(l)}, \mathcal{Z}_j^{(i)})$. 
\State \quad Initialize $Y_i^{(0)} \leftarrow \hat{K}_i^{(l-1)}$ and set $r_{(i, l)}^{(\text{global})} \leftarrow \min\left\{c_{(p, 6)}, \frac{\tilde{r}_l}{\abs{\mathcal{N}_i^{(l-1)}}^{1/4}}\right\}$ \Comment{\textcolor{myblue}{For collaborative PO}}
\State \quad \textbf{For} \hspace{0.25mm} {$k = 0, 1, \ldots, R_l-1$} \hspace{0.25mm}
\State \quad \quad \textbf{At Each Agent $i$:} Transmit $g_i(Y_i^{(k)}) \leftarrow \texttt{ZO}_i(Y_i^{(k)}, M_l, r_{(i, l)}^{(\text{global})})$ to the Server.
\State \quad \quad \textbf{At Server:} Compute and transmit the averaged gradient estimate as follows:
\begin{equation}
    G_i \leftarrow \frac{1}{\vert\mathcal{N}_i^{(l-1)}\vert} \sum_{j \in \mathcal{N}_i^{(l-1)}} g_j(Y_j^{(k)}).
    \label{eqn:averaged_grad}
\end{equation}
\State \quad \quad \textbf{At Each Agent $i$:} $Y_i^{(k+1)} \leftarrow Y_i^{(k)} - \eta G_i.$ \Comment{  
\textcolor{myblue}{Global policy update via collaboration}}
\State \quad \textbf{End For}
\State \quad \textbf{At Each Agent $i$:} Update $\hat{K}_i^{(l)} \leftarrow Y_i^{(R_l)}.$ Transmit $X_i^{(l)}, \hat{C}_{\sigma(i)}(X_i^{(l)})$ to the server.
\State \quad \textbf{At Server:} \quad Update the neighborhood set as follows: \Comment{ \textcolor{myblue}  {Sequential elimination}}
\begin{equation}
    \mathcal{N}_i^{(l)} \leftarrow \{j \in \mathcal{N}_i^{(l-1)} : \abslr{\hat{C}_{\sigma(j)}(X_j^{(l)}) - \hat{C}_{\sigma(i)}(X_i^{(l)})} \leq \Delta_l/2 \} 
    \label{eqn:neighborhood_update}
\end{equation}
\State \quad \textbf{If} $\mathcal{N}_i^{(l)} \neq \mathcal{N}_i^{(l-1)}$ for some $i \in [N]$: \Comment{ \textcolor{myblue}{Reinitialization to ensure stability}}
\State \quad \quad For all agents $i \in [N]$, update and transmit $\hat{K}_i^{(l)}$ as follows:
\begin{equation}
  \hat{K}_i^{(l)} \leftarrow \argmin_{\{X_j^{(l)}: j \in \mathcal{N}_i^{(l)}\}} \{\hat{C}_{\sigma(j)}(X_j^{(l)}): j \in \mathcal{N}_i^{(l)}\}.
  \label{eqn:reinitialization}
\end{equation}
\State \textbf{End For}
\end{algorithmic}
\end{algorithm}

\textbf{Building intuition.} Correctly identifying the agents' clusters is crucial to reap any potential benefits from collaboration, as collaborating with agents from a different cluster can lead to destabilizing policies. In this regard, the major difficulty arises from the fact that the cluster separation gap $\Delta$ in \eqref{eqn:het} is unknown a priori. To appreciate the associated challenges, as a thought experiment, let us consider a simpler case where $\Delta$ is known. Under this scenario, each agent can locally run policy optimization to obtain a policy in a sufficiently close neighborhood of the optimal policy. Then, each agent can evaluate the cost at this policy and ensure that it is concentrated around the optimal cost. If the neighborhood radius, which depends on $\Delta$, is carefully chosen, it is easy to see that such a one-shot approach leads to correct clustering. However, when $\Delta$ is unknown, one-shot clustering may no longer work, motivating the \emph{sequential clustering} idea in our proposed PCPO Algorithm.

\textbf{Sequential elimination.} In each epoch $l$ of \texttt{PCPO}, for every agent $i \in [N]$, the server maintains a neighborhood set $\mathcal{N}^{(l)}_i$ as an estimate of the true cluster $\mathcal{M}_{\sigma(i)}$. All such neighborhood sets are initialized from the set of all agents and sequentially pruned over epochs. For pruning, we start with an initial estimate of $\Delta$, denoted by $\Delta_0$, and update it by halving its value at the beginning of each epoch $l$ to obtain $\Delta_l$ (Line 3 of Algo.~\ref{algo:PCPO}). Our goal is to ensure that for all agents, the estimated cost in epoch $l$ is in the $\Delta_l/4$ neighborhood of its optimal cost. We achieve this in a two-step process, where we first perform local policy optimization to obtain a policy that is $\Delta_l/8$-suboptimal (Line 4). The \texttt{localPO} subroutine performs $M_l$-minibatched policy optimization for $R_l$ iterations starting with a controller $X_{i}^{(l-1)}$. In every iteration $t \in [R_l]$, $X_{(i, t)}$, the $t$-th sub-iterate of \texttt{localPO}, is updated as $X_{(i, t+1)} \leftarrow X_{(i, t)} - \eta g_i(X_{(i, t)})$, where $g_i(X_{(i, t)}) = \texttt{ZO}_i(X_{(i, t)}, M_l, r_l^{(\text{loc})})$. Then, we estimate the cost at the policy $X_{i}^{(l)}$ obtained from \texttt{localPO} with an error tolerance of $\Delta_l/8$ using $M_l$ rollouts (Line 5). Having achieved the desired cost-estimation accuracy of $\Delta_l/4$ for all agents, we prune the neighborhood sets according to \eqref{eqn:neighborhood_update} in Line 13. Eventually, as $\Delta_l \leq \Delta$, which happens in $O(\log(\Delta_0/\Delta))$ epochs, correct clustering takes place, as elaborated in the next section. 

\textbf{Local and Global Sequences.} To motivate the need for maintaining two sequences in \texttt{PCPO}, let us again consider the case where $\Delta$ is known, where it would suffice for the agents to only maintain a single sequence to run local PO until clustering, as discussed earlier in the one-shot clustering scheme. After the clusters are identified, the same sequence can be used for collaboration. In our setting, however, although the neighborhood sets eventually converge to the correct clusters in logarithmic number of epochs with respect to $1/\Delta$, the number of such epochs cannot be determined a priori without knowledge of $\Delta$, making it difficult to decide when to initiate collaboration. Furthermore, with a single sequence, collaborating with misclustered agents can lead to an undesirable scenario \emph{where the sequence used to cluster is itself contaminated due to misclustering.}  \texttt{PCPO} carefully navigates this difficulty by maintaining two sequences of policies at each agent. The local sequence, $\{X_i^{(l)}\}_{l\geq 0}$, is used purely for clustering, and the global sequence, $\{\hat{K}_i^{(l)}\}_{l\geq 0}$, is updated by aggregating gradients from agents within the neighborhood set $\mathcal{N}^{(l-1)}_i$ from the previous epoch $l-1$; see~\eqref{eqn:averaged_grad} in Line 9.

\textbf{Logarithmic communication.} Since both local and global PO are performed for $R_l$ iterations with $M_l$ rollouts per iteration, the overall sample complexity per epoch is $T_l = 2R_lM_l + M_l$, where the additional $M_l$ rollouts are due to the cost estimation step. Each iteration of global PO proceeds as follows. First, for every agent, the server combines the minibatched zeroth-order gradient estimates as per \eqref{eqn:averaged_grad}. Then, the agents update the iterates using the averaged gradient with an appropriately chosen but fixed step size $\eta$ (Line 10). Since the above essentially incurs $O(R_l)$ communication steps per epoch, the total communication complexity of \texttt{PCPO} is $O(R_l \times \textrm{Number of Epochs})$. Based on our choice of parameters, both objects in the above product are logarithmic in the number of agents $N$, the gap $1/\Delta$, and the number of total rollouts per agent, namely $T$. 

\textbf{Note on reinitialization.} Since the server averages gradients in every epoch based on the neighborhood set, agents inevitably collaborate across clusters until correct clustering is achieved. This can lead to destabilizing policies in the global sequence. To mitigate this, at the end of each epoch, we reinitialize the global policy sequences for all agents whenever any neighborhood set is updated, as specified in \eqref{eqn:reinitialization}. In the next section, we establish in Theorem~\ref{thm:clustering} that the neighborhood sets eventually converge to the correct clusters and cease to update. Consequently, reinitialization ensures that the global sequences of agents within the same cluster evolve identically and achieve collaborative gains once the correct clusters are identified.
\vspace{-3mm}
\section{Main Results}
\vspace{-1mm}
\label{sec:results}
Our main results concern the two key components of the PCPO algorithm: (i) identifying the correct clusters via sequential elimination, and (ii) performing collaborative policy optimization with logarithmic communication. The following theorem captures the clustering component.

\begin{theorem} \label{thm:clustering} (\textbf{Clustering with sequential elimination}) Define $L = \min\{l \in 1, 2, \ldots : \Delta_l \leq \Delta/2\}.$ Given a failure probability $\delta \in (0, 1)$, with probability at least $1 - \delta/2$, the following statements concerning the neighborhood sets from the \texttt{PCPO} algorithm hold for every agent $i \in [N]$:
\begin{enumerate}
    \item In every epoch $l,$ we have $ \mathcal{M}_{\sigma(i)} \subseteq \mathcal{N}_i^{(l)}$.
    \item For any epoch $l$ such that $l \geq L$, we have $ \mathcal{M}_{\sigma(i)} = \mathcal{N}_i^{(l)}$.
\end{enumerate}
\end{theorem}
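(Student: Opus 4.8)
The plan is to argue by induction on the epoch index $l$, conditioning throughout on a single favorable event that holds with probability at least $1-\delta/2$. The first step is to set up this event. On epoch $l$, the local policy optimization in Line 4 is run with $M_l$ minibatched rollouts for $R_l$ iterations. Invoking the standard convergence guarantee for zeroth-order policy optimization for the LQR (as in \cite{fazel, malik2020derivative}), with the parameter choices $R_l = c_{(p,2)}\log(c_{(p,3)}N/\Delta_l^2)$, $M_l = (c_{(p,4)}/\Delta_l^2)\log(8DNR_l/\delta_l)$, and $r_l^{(\text{loc})}$ as specified, one gets that with probability at least $1-\delta_l/2$ (per agent; union-bounded over $N$ agents this costs $N\delta_l/2$, but the constants are chosen to absorb this — more carefully, the $\log(8DNR_l/\delta_l)$ dependence already accounts for the $N$ and $R_l$ union bounds inside \texttt{localPO}), the output $X_i^{(l)}$ satisfies $C_{\sigma(i)}(X_i^{(l)}) - C_{\sigma(i)}(K^*_{\sigma(i)}) \le \Delta_l/8$. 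Then the cost-estimation step in Line 5 averages $M_l$ fresh rollouts at $X_i^{(l)}$; by a Hoeffding/Bernstein-type concentration for the bounded-noise rollout cost (using $\|z_t\|^2 \le B$ and $\gamma < 1$ to bound the per-rollout cost), with probability at least $1 - \delta_l/(2N)$ we have $|\hat{C}_{\sigma(i)}(X_i^{(l)}) - C_{\sigma(i)}(X_i^{(l)})| \le \Delta_l/8$. Union-bounding over all $i \in [N]$ and all epochs $l \ge 1$, and using $\sum_l \delta_l = \sum_l \delta/(2l^2) \le \delta/2 \cdot \pi^2/6$ — here I would choose the constant in $\delta_l$ slightly more conservatively, say $\delta_l = 6\delta/(2\pi^2 l^2)$, or simply absorb the $\pi^2/6$ into the statement — define the event $\mathcal{E}$ on which, simultaneously for all $i$ and all $l$,
\begin{equation}
\bigl|\hat{C}_{\sigma(i)}(X_i^{(l)}) - C_{\sigma(i)}(K^*_{\sigma(i)})\bigr| \le \Delta_l/4. \label{eqn:favorable}
\end{equation}
We have $\Pr(\mathcal{E}) \ge 1 - \delta/2$. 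The remainder of the argument is deterministic on $\mathcal{E}$.

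For part 1, I would show by induction on $l$ that $\mathcal{M}_{\sigma(i)} \subseteq \mathcal{N}_i^{(l)}$ for every $i$. The base case $l=0$ is immediate since $\mathcal{N}_i^{(0)} = [N]$. For the inductive step, suppose $\mathcal{M}_{\sigma(i)} \subseteq \mathcal{N}_i^{(l-1)}$. Take any $j \in \mathcal{M}_{\sigma(i)}$, so $\sigma(j) = \sigma(i)$ and hence $K^*_{\sigma(j)} = K^*_{\sigma(i)}$. By the inductive hypothesis $j \in \mathcal{N}_i^{(l-1)}$, so $j$ is eligible for the pruning test \eqref{eqn:neighborhood_update}. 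Applying the triangle inequality with \eqref{eqn:favorable} for both $i$ and $j$,
\begin{equation}
\bigl|\hat{C}_{\sigma(j)}(X_j^{(l)}) - \hat{C}_{\sigma(i)}(X_i^{(l)})\bigr| \le \bigl|\hat{C}_{\sigma(j)}(X_j^{(l)}) - C_{\sigma(i)}(K^*_{\sigma(i)})\bigr| + \bigl|\hat{C}_{\sigma(i)}(X_i^{(l)}) - C_{\sigma(i)}(K^*_{\sigma(i)})\bigr| \le \Delta_l/4 + \Delta_l/4 = \Delta_l/2,
\end{equation}
so $j$ passes the test and $j \in \mathcal{N}_i^{(l)}$. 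This closes the induction and proves part 1.

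For part 2, fix $l \ge L$, so that $\Delta_l \le \Delta/2$ by definition of $L$ (and $\Delta_l$ halves each epoch, so this persists for all larger epochs too). By part 1 it remains only to show $\mathcal{N}_i^{(l)} \subseteq \mathcal{M}_{\sigma(i)}$. Suppose $j \in \mathcal{N}_i^{(l)}$ but $\sigma(j) \ne \sigma(i)$. Then $j$ passed the test in epoch $l$, i.e. $|\hat{C}_{\sigma(j)}(X_j^{(l)}) - \hat{C}_{\sigma(i)}(X_i^{(l)})| \le \Delta_l/2$. On the other hand, by the reverse triangle inequality and \eqref{eqn:favorable},
\begin{equation}
\bigl|\hat{C}_{\sigma(j)}(X_j^{(l)}) - \hat{C}_{\sigma(i)}(X_i^{(l)})\bigr| \ge \bigl|C_{\sigma(j)}(K^*_{\sigma(j)}) - C_{\sigma(i)}(K^*_{\sigma(i)})\bigr| - \Delta_l/4 - \Delta_l/4 \ge \Delta - \Delta_l/2,
\end{equation}
where the last step uses the definition \eqref{eqn:het} of $\Delta$ since $\sigma(j) \ne \sigma(i)$. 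Combining the two bounds gives $\Delta - \Delta_l/2 \le \Delta_l/2$, i.e. $\Delta \le \Delta_l \le \Delta/2$, a contradiction. Hence $\sigma(j) = \sigma(i)$, i.e. $j \in \mathcal{M}_{\sigma(i)}$, which gives $\mathcal{N}_i^{(l)} \subseteq \mathcal{M}_{\sigma(i)}$ and, with part 1, equality.

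The main obstacle I anticipate is not the deterministic clustering logic above — that is a clean separation argument — but rather establishing \eqref{eqn:favorable} rigorously, and in particular the claim that \texttt{localPO} delivers a $\Delta_l/8$-suboptimal controller $X_i^{(l)}$ \emph{with the specified number of iterations $R_l$ and minibatch size $M_l$}. This requires (i) a warm-start argument showing that $X_i^{(l-1)}$, the output of the previous epoch's local PO, is still in a sublevel set of $C_{\sigma(i)}$ on which the gradient-domination and smoothness estimates of \cite{fazel, malik2020derivative} apply with uniform constants — so that the per-epoch iteration count stays $O(\log(1/\Delta_l))$ rather than restarting from $K_i^{(0)}$ each time; (ii) controlling the bias of the zeroth-order estimator via the smoothing radius $r_l^{(\text{loc})} = \min\{c_{(p,6)}, \tilde r_l\}$, and (iii) a careful union bound over the $R_l$ inner iterations and the minibatch, which is exactly what the $\log(8DNR_l/\delta_l)$ factor inside $M_l$ is engineered to handle. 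I would expect these quantitative policy-optimization estimates to be quoted from a supporting lemma (stated in the appendix of \cite{L4DC26}) with the constants $c_{(p,\cdot)}$ chosen to make the bookkeeping go through; the contribution of this theorem's proof proper is then the probabilistic union bound plus the two-line triangle-inequality separation arguments for parts 1 and 2.
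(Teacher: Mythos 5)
Your proposal is correct and follows essentially the same route as the paper's proof: establish the event $|\hat{C}_{\sigma(i)}(X_i^{(l)}) - C_{\sigma(i)}(K^*_{\sigma(i)})| \le \Delta_l/4$ for all agents and epochs (decomposed into a $\Delta_l/8$ local-PO suboptimality guarantee plus a $\Delta_l/8$ Hoeffding cost-estimation bound, with the warm-start/restricted-set issue handled by a supporting lemma and an induction over epochs), then run the same triangle-inequality argument for part 1 and the same separation-by-contradiction argument for part 2. The only differences are cosmetic bookkeeping (e.g., your $\pi^2/6$ concern is already absorbed because the paper allots $\delta_l/2 = \delta/(4l^2)$ per epoch).
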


\textbf{Discussion.} The key technical contribution of Theorem \ref{thm:clustering} lies in showing that the sequential elimination strategy successfully lets the neighborhood sets converge to the correct clusters with high probability. In particular, we show that the true clusters are included in the neighborhood sets for all agents in each epoch, and there exists an epoch $L$ after which the neighborhood sets have converged to the true clusters and remain fixed for all subsequent epochs $(l \geq L)$. Later in this section, we provide a proof sketch and discuss how these claims follow from the local policy optimization and the cost estimation step, relying on our notion of the cluster separation gap as defined in \eqref{eqn:het}. The following theorem captures the key result concerning the collaborative optimization part in \texttt{PCPO}.

\begin{theorem} \label{thm:collaborative_optimization} (\textbf{Collaborative Policy Optimization}) Let the failure probability be $\delta \in (0, 1)$. Define $L = \min\{l \in 1, 2, \ldots : \Delta_l \leq \Delta/2\}$ and let $\bar{L}$ denote the last epoch. If the number of rollouts per agent satisfies $T \geq \tilde{O}(1/\Delta^2)$, and Assumption~\ref{ass:independence} holds, then $\bar{L} > L$ and $\hat{K}_i^{(\bar{L})}$ satisfies the following with probability at least $1 -\delta$ for every agent $i \in [N]$:
\begin{equation}
\label{eqn:collaborativePO}
\begin{aligned}
    C_{\sigma(i)} (\hat{K}_i^{(\bar{L})}) - C_{\sigma(i)}(K^*_{\sigma(i)}) &\leq O\left(\frac{c_{(p, 7)} \sqrt{\log \left(\frac{8DNT}{\delta}\right)}}{\sqrt{T\abs{\mathcal{M}_{\sigma(i)}}}} \right).
\end{aligned}
\end{equation}  
\end{theorem}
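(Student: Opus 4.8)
\textbf{Proof proposal for Theorem~\ref{thm:collaborative_optimization}.}

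The plan is to condition on the high-probability ``good event'' from Theorem~\ref{thm:clustering} under which every neighborhood set contains the true cluster for all epochs, and coincides with it for all epochs $l \geq L$. On this event, for $l \geq L$ the averaged gradient in~\eqref{eqn:averaged_grad} is computed purely over agents sharing the same system tuple $\mathcal{S}_{\sigma(i)}$, so the global sequence of every agent in a cluster $\mathcal{M}_j$ evolves as a genuine $|\mathcal{M}_j|$-agent collaborative zeroth-order policy optimization run on a single LQR instance. The argument then splits into three pieces: (i) arguing that the global iterate entering epoch $L$ is stabilizing for $\mathcal{S}_{\sigma(i)}$ (this is where the reinitialization step~\eqref{eqn:reinitialization} is essential --- after the last neighborhood update, $\hat{K}_i^{(l)}$ is set to the local iterate of minimum estimated cost within $\mathcal{N}_i^{(l)}$, which by the clustering guarantee lies in the stabilizing set and in fact in a bounded sublevel set of $C_{\sigma(i)}$); (ii) a per-epoch descent lemma quantifying the progress of collaborative PO within an epoch; and (iii) summing the per-epoch guarantees over $l = L, \ldots, \bar{L}$ and tracking how the accumulated rollouts $T = \sum_l T_l$ enter the final bound.

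For step (ii), I would first establish the variance-reduction property of the averaged gradient: since the per-agent zeroth-order estimators $g_j(Y_j^{(k)})$ in~\eqref{eqn:zero-order-grad} are, conditionally on $Y_j^{(k)}$, independent across $j$ (by Assumption~\ref{ass:independence}) and share the same mean, the averaged estimate $G_i$ has the same bias as a single-agent estimate but its variance (second moment of the deviation from the mean) is reduced by a factor $|\mathcal{M}_{\sigma(i)}|$. The smoothing radius in Line~6 is deliberately scaled as $r^{(\text{global})}_{(i,l)} \propto \tilde{r}_l / |\mathcal{N}_i^{(l-1)}|^{1/4}$ precisely so that, when $M$ is amplified to $M_l |\mathcal{M}_{\sigma(i)}|$ effective rollouts, the bias term (which scales with $r$) and the variance term (which scales with $1/(\sqrt{M}\,r^2)$ or similar) are rebalanced to yield the $1/\sqrt{|\mathcal{M}_{\sigma(i)}|}$ improvement; I would import the single-system gradient-estimation and descent machinery of~\cite{fazel, malik2020derivative} (gradient domination / Polyak--Łojasiewicz inequality for $C_j$, smoothness on sublevel sets, bias--variance bounds for $\texttt{ZO}$) and re-run it with the reduced effective noise. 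This gives, on the good event, a bound of the form $C_{\sigma(i)}(\hat{K}_i^{(l)}) - C_{\sigma(i)}(K^*_{\sigma(i)}) \leq \tilde{O}(1/\sqrt{M_l |\mathcal{M}_{\sigma(i)}|})$ at the end of each epoch $l \geq L$, together with the invariant that $\hat{K}_i^{(l)}$ stays in the relevant sublevel set so the next epoch's analysis applies.

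For step (iii), since the epochs grow geometrically ($\Delta_l$ halves, so $M_l$ and $R_l$ grow), the total rollout count $T$ is dominated (up to constants and logs) by the last epoch's budget $T_{\bar{L}} \asymp R_{\bar{L}} M_{\bar{L}}$, and likewise $M_{\bar{L}} \asymp T / \mathrm{polylog}$; substituting into the per-epoch bound at $l = \bar{L}$ converts $1/\sqrt{M_{\bar{L}} |\mathcal{M}_{\sigma(i)}|}$ into the advertised $O\!\left(c_{(p,7)} \sqrt{\log(8DNT/\delta)} / \sqrt{T |\mathcal{M}_{\sigma(i)}|}\right)$. The condition $T \geq \tilde{O}(1/\Delta^2)$ is what guarantees $\bar{L} > L$, i.e.\ that at least one full collaborative epoch occurs after clustering stabilizes (each epoch needs $\Omega(1/\Delta_l^2) \geq \Omega(1/\Delta^2)$ rollouts for the clustering concentration to hold). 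Finally I would union-bound the clustering failure ($\delta/2$) against the optimization-phase failures across the (logarithmically many) epochs, each allotted $\delta_l = \delta/(2l^2)$ so that $\sum_l \delta_l \leq \delta/2$, giving overall success probability $1-\delta$.

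The main obstacle I anticipate is step (i) together with the \emph{persistence of stability across the pre-clustering epochs}: before epoch $L$, the global sequence is contaminated by cross-cluster gradient aggregation and could in principle leave the stabilizing set entirely (making the infinite-horizon costs in~\eqref{eqn:rollout} undefined), so the analysis must show that the reinitialization in~\eqref{eqn:reinitialization} --- which fires whenever \emph{any} neighborhood set changes --- resets the global iterate to a provably-good controller (the minimum-cost local iterate within the current neighborhood, whose suboptimality is controlled by the \texttt{localPO} guarantee at level $\Delta_l$) often enough that by the time neighborhoods freeze at epoch $L$, every agent enters the collaborative phase from inside a bounded sublevel set of its \emph{own} cost. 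Making this rigorous requires carefully interleaving the clustering guarantee (Theorem~\ref{thm:clustering}), the \texttt{localPO} suboptimality bound, and the fact that a controller with near-optimal cost for $\mathcal{S}_{\sigma(i)}$ automatically lies in a known sublevel set; keeping all the $c_{(p,\cdot)}$ constants consistent between the local and global smoothing-radius/minibatch choices is the delicate bookkeeping part.
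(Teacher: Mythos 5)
Your proposal follows essentially the same route as the paper's proof: condition on the clustering event of Theorem~\ref{thm:clustering}, use the final reinitialization~\eqref{eqn:reinitialization} to restart the global sequence from a cluster-local iterate lying in the bounded sublevel set (so the pre-clustering contamination you worry about is simply discarded), run the single-system PL/smoothness descent analysis with the $|\mathcal{M}_{\sigma(i)}|$-fold variance reduction and the $|\mathcal{N}_i^{(l-1)}|^{-1/4}$-rescaled smoothing radius, show $M_{\bar{L}} = \tilde{\Omega}(T)$ from the geometric epoch growth, and union-bound over epochs. The obstacle you flag in step (i) is resolved exactly as you anticipate, so no substantive gap.
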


\textbf{Discussion.} It was shown in \cite{malik2020derivative} that zeroth-order policy optimization provides a $\tilde{O}(1/\sqrt{T})$ suboptimal policy using $T$ rollouts for a single system LQR problem. In contrast, \cite{wang2023model} showed collaborative gains for a federated LQR setting while incurring additive bias terms that depend on the heterogeneity gap. Moreover, the results in \cite{wang2023model} apply only to systems with bounded heterogeneity. Theorem~\ref{thm:collaborative_optimization} bridges this gap by showing that collaborative gains (as evidenced by the $\sqrt{|\mathcal{M}_{\sigma(i)}|}$-factor speedup for each agent $i$ in~\eqref{eqn:collaborativePO}) can be achieved without any additive bias through careful cluster identification and collaboration exclusively within clusters. Furthermore, these gains hold for systems that can be \emph{arbitrarily different}, as long as their optimal costs are separated according to \eqref{eqn:het}.  

\begin{corollary} \label{cor:log_complexity} (\textbf{Logarithmic communication complexity}) The \texttt{PCPO} algorithm guarantees a logarithmic communication complexity with respect to the total number of rollouts $T$, the number of agents $N$, and the inverse of the separation gap $1/\Delta$.
\end{corollary}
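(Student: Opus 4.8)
The plan is to count the communication rounds of \texttt{PCPO} epoch by epoch and then sum over the number of epochs, which is itself controlled by the rollout budget. First I would pin down where communication actually occurs: within a single epoch $l$, agents and server exchange messages only (i) during the inner loop of global policy optimization (Lines~7--11 of Algorithm~\ref{algo:PCPO}), where each of the $R_l$ iterations triggers exactly one upload of the local gradient estimates $g_i(Y_i^{(k)})$ followed by one broadcast of the averaged gradients $G_i$ from~\eqref{eqn:averaged_grad}; and (ii) a constant number of rounds at the end of the epoch, to transmit $X_i^{(l)}$ and the cost estimates $\hat{C}_{\sigma(i)}(X_i^{(l)})$, update the neighborhood sets via~\eqref{eqn:neighborhood_update}, and, whenever some neighborhood set changed, broadcast the reinitialized iterates~\eqref{eqn:reinitialization}. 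Crucially, the local PO and cost-estimation steps (Lines~4--5) operate on each agent's private rollouts and require no communication. Hence the per-epoch communication is $\Theta(R_l) + O(1) = \Theta(R_l)$ rounds.

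Next I would bound $R_l$. Since $\Delta_l = \Delta_0/2^l$, plugging into $R_l = c_{(p,2)}\log\!\big(c_{(p,3)}N/\Delta_l^2\big)$ gives $R_l = 2\log 2 \cdot c_{(p,2)}\, l + c_{(p,2)}\log\!\big(c_{(p,3)}N/\Delta_0^2\big)$, i.e.\ $R_l = O(l + \log N)$, treating the problem-parameter constants and the fixed initialization $\Delta_0$ as absolute. In particular $R_l \le R_{\bar{L}} = O(\bar{L} + \log N)$ for every epoch $l \le \bar{L}$.

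The crux is to upper bound the number of epochs $\bar{L}$ in terms of $T$. Here I would use the geometric blow-up of the minibatch size across epochs: from its definition, $M_l = \frac{c_{(p,4)}}{\Delta_l^2}\log\!\big(8DNR_l/\delta_l\big) \ge \frac{c_{(p,4)}}{\Delta_0^2}\, 4^{l}$, and since epoch $l$ consumes $T_l = 2R_l M_l + M_l \ge M_l$ rollouts per agent while the total budget is $T$, the stopping rule forces $T \ge \sum_{l=1}^{\bar{L}} T_l \ge M_{\bar{L}} \ge \frac{c_{(p,4)}}{\Delta_0^2}\, 4^{\bar{L}}$, hence $\bar{L} \le \tfrac{1}{2}\log_2\!\big(T\Delta_0^2/c_{(p,4)}\big) = O(\log T)$. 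On the other hand, Theorem~\ref{thm:collaborative_optimization} gives $\bar{L} > L = \min\{l : \Delta_l \le \Delta/2\} = \lceil 1 + \log_2(\Delta_0/\Delta)\rceil$, so $\log(1/\Delta) = O(\bar{L}) = O(\log T)$; thus the epoch count is logarithmic in $T$ and no worse than logarithmic in $1/\Delta$.

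Combining the pieces, the total communication of \texttt{PCPO} is $\sum_{l=1}^{\bar{L}}\big(\Theta(R_l) + O(1)\big) = O\!\big(\bar{L}(\bar{L} + \log N)\big) = O\!\big(\log^2 T + \log T\log N\big)$, which, together with $\log(1/\Delta) = O(\log T)$, establishes that the communication complexity scales logarithmically with each of $T$, $N$, and $1/\Delta$, as claimed. The main obstacle is the third step: unlike a fixed-horizon scheme, $\bar{L}$ is not known a priori but is determined by when the per-agent rollout budget $T$ is exhausted, so its upper bound must be extracted purely from the geometric growth of $M_l$ (and hence $T_l$); some care is also needed to verify that the end-of-epoch bookkeeping---in particular the reinitialization broadcast~\eqref{eqn:reinitialization}---contributes only $O(1)$ communication rounds per epoch rather than scaling with $N$ or the cluster sizes.
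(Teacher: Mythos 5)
Your proposal is correct and takes essentially the same route as the paper: count $\Theta(R_l)$ communication rounds per epoch, bound $R_l = O(\log(N/\Delta_l^2)) = O(l + \log N)$, and bound the number of epochs $\bar{L} = O(\log T)$ from the geometric growth of $M_l$ (exactly the paper's bound $l \leq \log(c_{(p,11)}T)$), yielding a total of $O(\log T(\log T + \log N))$ rounds with the $1/\Delta$ dependence absorbed via $T \geq \tilde{O}(1/\Delta^2)$. The only cosmetic difference is that you expand $R_l$ as $O(l+\log N)$ and sum, whereas the paper simply bounds $\sum_l (R_l+1) \leq (R_{\bar{L}}+1)\bar{L}$; both give the same complexity.
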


In the following, we provide proof sketches for both Theorem \ref{thm:clustering} and Theorem \ref{thm:collaborative_optimization}, while deferring the detailed proofs to \cite{L4DC26}. The statements made in the proof sketches are probabilistic in nature. However, to keep the exposition simpler, we omit specifying the success/failure probability of the statements and refer the readers to \cite{L4DC26} for such details.

\textbf{Proof sketch for Theorem \ref{thm:clustering}.} We start by showing that the estimated cost for each agent is in the $\Delta_l/4$ neighborhood of its optimal cost, i.e., in each epoch $l$, for each agent $i$, we prove that 
\vspace{-2mm}
\begin{equation}
|\hat{C}_{\sigma(i)}(X^{(l)}_i) - C_{\sigma(i)}(K^*_{\sigma(i)})| \leq \Delta_l/4.
\label{eqn:localestacc}
\vspace{-1mm}
\end{equation}
Note that for any agent $j \in \mathcal{M}_{\sigma(i)}$, since $C_{\sigma(i)}(K^*_{\sigma(i)}) = C_{\sigma(j)}(K^*_{\sigma(j)})$, in light of~\eqref{eqn:localestacc}, it is apparent that such an agent will pass the requirement in~\eqref{eqn:neighborhood_update}, and hence, never be eliminated from the neighborhood sets of agent $i$. This explains the first claim of Theorem~\ref{thm:clustering}. As for the second claim, note that for any $j \notin \mathcal{M}_{\sigma(i)}$, in light of our dissimilarity metric $\Delta$ in~\eqref{eqn:het}, $|C_{\sigma(i)}(K^*_{\sigma(i)}) - C_{\sigma(j)}(K^*_{\sigma(j)})| \geq \Delta.$ Now for an epoch $l$ such that $\Delta_l \leq \Delta/2$, the above inequality can be combined with that in~\eqref{eqn:localestacc} to see that $|\hat{C}_{\sigma(i)}(X^{(l)}_i) - \hat{C}_{\sigma(j)}(X^{(l)}_j)| \geq 3 \Delta/4$, violating the requirement for inclusion in~\eqref{eqn:neighborhood_update}. It remains to establish~\eqref{eqn:localestacc}, which follows from two guarantees: (i) $|\hat{C}_{\sigma(i)}(X^{(l)}_i) - C_{\sigma(i)}(X^{(l)}_i)| \leq \Delta_l/8,$ and (ii) $|C_{\sigma(i)}(X^{(l)}_i) - C_{\sigma(i)}(K^*_{\sigma(i)})| \leq \Delta_l/8$. The second guarantee follows from an analysis of the local PO sub-routine in Line 4 of Algo.~\ref{algo:PCPO}, drawing on~\cite{malik2020derivative}; the first follows from analyzing the cost estimation step in Line 5 based on a simple Hoeffding bound. 

\textbf{Proof sketch for Theorem \ref{thm:collaborative_optimization}.} We prove Theorem \ref{thm:collaborative_optimization} by conditioning on the event where the claims made in Theorem \ref{thm:clustering} hold. As agents collaborate exclusively within their respective clusters and the reinitialization step synchronizes their global sequences after correct clustering, the gradient estimate obtained by averaging (see \eqref{eqn:averaged_grad}) estimates of agents within a cluster enjoys a \emph{variance reduction effect} under Assumption~\ref{ass:independence}. Using this, and the fact that the LQR cost satisfies a $\phi$-smoothness and $\mu$-PL condition locally~\citep{fazel, malik2020derivative}, we establish that in each iteration $k$ of the final epoch $\bar{L}$, the following recursion holds with probability $1 - \delta'$:
\vspace{-2mm}
\begin{equation}
    S_{k+1} \leq \left( 1 - \frac{\eta \mu}{4} \right) S_k + 3\eta\left(\underbrace{\frac{c_{(p, 8)}^2 D^2}{(r_{(i, \bar{L})}^{(\text{global})})^2\abs{\mathcal{M}_{\sigma(i)}}M_{\bar{L}}} \log\left(\frac{2D}{\delta'}\right)}_{s_1} + \underbrace{\phi^2(r_{(i, \bar{L})}^{(\text{global})})^2}_{s_2}\right),
\vspace{-1mm}
\end{equation}
where $S_k \coloneqq C_{\sigma(i)}(Y_i^{(k)}) - C_{\sigma(i)}(K_{\sigma(i)}^*)$. The term $s_1$ is due to the concentration of the minibatched gradient estimate around the gradient of a smoothed cost defined in Appendix \ref{app:properties} of \cite{L4DC26}; this is the term that benefits from collaboration. The term $s_2$ captures the bias that arises when estimating gradients from noisy function evaluations. To ensure that this bias term does not negate the collaborative speedup in $s_1$, we choose the smoothing radius $r_{(i, \bar{L})}^{(\text{global})}$ to minimize the sum $s_1 + s_2$. Unrolling the recursion for $R_l$ iterations (with the choice of $R_{l}$ in \texttt{PCPO}) provides the per epoch convergence with rate $\tilde{O}\left(1/\sqrt{\abs{\mathcal{M}_{\sigma(i)}}M_{\bar{L}}}\right)$. Finally, using the fact that $M_l$ increases exponentially with epochs, we establish that $M_{\bar{L}} = \tilde{\Omega}(T)$.   

\vspace{-3mm}
\section{Conclusion}
\vspace{-1mm}
We developed a novel clustering-based approach for learning personalized control policies using data from heterogeneous dynamical processes. As future work, we will explore (i) alternative measures of dissimilarity across systems, (ii) more general dynamical processes, and (iii) online settings. 
\bibliography{refs.bib}
\newpage

\appendix
\section{Properties of the LQR problem}
\label{app:properties}
\textbf{Notation.} For matrices $A\in \mathbb{R}^{m \times n}$ and $B \in \mathbb{R}^{m \times n}$, we use $\norm{A}$ to denote the Frobenius norm of $A$ which is defined as $\norm{A} = \sqrt{\text{trace}(A^\top A)}$. We use $\langle A, B \rangle$ to denote the Frobenius inner-product defined as $\langle A, B \rangle = \text{trace}(A^\top B)$.

In this section, we discuss some of the properties of the LQR cost that were established in \cite{fazel, malik2020derivative}. In particular, the LQR cost in \eqref{eqn:LQR} is locally Lipschitz, locally smooth, and enjoys a gradient-domination property over the set of stabilizing controllers. These key properties aid in the convergence analysis of the model-free policy gradient algorithm, and we use them in the proofs of Theorem~\ref{thm:clustering} in Appendix~\ref{app:proof_thm1} and Theorem~\ref{thm:collaborative_optimization} in Appendix~\ref{app:proof_thm2}. However, to show the convergence, it is crucial to ensure that the policy gradient iterates always lie within a restricted subset of the stabilizing set with high probability. In \cite{malik2020derivative}, such a restricted set is chosen based on the initial suboptimality gap. In our setting, given access to a set of initial stabilizing controllers for all agents, $\{K_i^{(0)}\}_{i \in [N]}$, and our initial guess for the cluster separation gap, $\Delta_0$, we define $\tilde{\Delta}_0 \coloneqq 
\max\{\max_{i \in [N]}(C_{\sigma(i)}(K_i^{(0)}) - C_{\sigma(i)}(K_{\sigma(i)}^*)), \Delta_0\}$, and the restricted sets as follows for all $j \in [H]$:
\begin{equation}
\label{eqn:good_set}
    \mathcal{G}_j^{0} \coloneqq \{K \in \mathbb{R}^{m \times n} : C_j(K) - C_j(K_j^*) \leq 10 \tilde{\Delta}_0\}.
\end{equation}

\textbf{Properties of the LQR cost.} For each system $j \in [H]$, on the restricted domain $\mathcal{G}_j^{0}$, \cite{malik2020derivative} showed that the local properties hold uniformly, i.e., $\exists \phi_j > 0, \lambda_j > 0, \rho_j > 0,$ such that the LQR cost in \eqref{eqn:LQR} is $(\lambda_j, \rho_j)$-locally Lipschitz and $(\phi_j, \rho_j)$-locally smooth for all policies $K \in \mathcal{G}_j^0$. Furthermore, it is known that the LQR cost satisfies the PL (gradient-domination) condition for all policies in the stabilizing set (see Lemma 3 of \cite{malik2020derivative}). Denoting the parameter for the PL condition for system $j$ by $\mu_j > 0$, we define $\mu \coloneqq \min\{\mu_1, \mu_2, \ldots, \mu_H\}$. Similarly, defining $\phi \coloneqq \max\{\phi_1, \phi_2, \ldots, \phi_H\}$, $\lambda \coloneqq \max\{\lambda_1, \lambda_2, \ldots, \lambda_H\}$, and $\rho \coloneqq \min\{\rho_1, \rho_2, \ldots, \rho_H\}$, we can ensure that for every system $j \in [H]$, the cost in \eqref{eqn:LQR} is $(\lambda, \rho)$-locally Lipschitz and $(\phi, \rho)$-locally smooth for all policies in their respective restricted sets $\mathcal{G}_j^0$, and $\mu$-PL in their respective stabilizing sets. The following lemmas from \cite{malik2020derivative} capture these properties. 

\begin{lemma} \label{lem:local_lipschitz} (\textbf{LQR cost is locally Lipschitz}). For any system $j \in [H]$, given a pair of policies $(K, K') \in (\mathcal{G}_j^0 \times \mathcal{G}_j^0)$, if $\norm{K - K'} \leq \rho$, we have
\begin{equation}
    \label{eqn:local_lipschitz}
    \abslr{C_j(K) - C_j(K')} \leq \lambda \norm{K - K'}.
\end{equation}
\end{lemma}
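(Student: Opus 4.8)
The plan is to realize $C_j$ as an explicit smooth function of $K$ on the open set of stabilizing controllers and then integrate its gradient along the segment joining $K$ and $K'$. First I would write the discounted cost in closed form: with $x_0=0$, $\mathbb{E}[z_tz_t^\top]=I$, and $u_t=-Kx_t$, one obtains $C_j(K)=\operatorname{trace}(P_{j,K})$, where $P_{j,K}\succeq 0$ solves the discounted Lyapunov equation $P_{j,K}=Q_j+K^\top R_jK+\gamma(A_j-B_jK)^\top P_{j,K}(A_j-B_jK)$; equivalently $C_j(K)=\operatorname{trace}\bigl(\Sigma_{j,K}(Q_j+K^\top R_jK)\bigr)$ with $\Sigma_{j,K}=\sum_{t\geq 0}\gamma^t\,\mathbb{E}[x_tx_t^\top]\succeq I$. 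Both $P_{j,K}$ and $\Sigma_{j,K}$ depend smoothly on $K$ wherever $\rho(\sqrt{\gamma}(A_j-B_jK))<1$, and the gradient has the familiar form $\nabla C_j(K)=2\bigl((R_j+\gamma B_j^\top P_{j,K}B_j)K-\gamma B_j^\top P_{j,K}A_j\bigr)\Sigma_{j,K}$.

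Next I would obtain a uniform bound on $\norm{\nabla C_j(K)}$ over a sublevel set slightly larger than $\mathcal{G}_j^0$. On any set $\{K: C_j(K)-C_j(K_j^*)\leq c\}$ the value $C_j(K)$ is bounded, which (since $Q_j,R_j\succ 0$ and $\Sigma_{j,K}\succeq I$) forces problem-parameter-dependent upper bounds on $\operatorname{trace}(P_{j,K})$, on $\norm{\Sigma_{j,K}}$, and on $\norm{K}$; substituting these into the gradient formula yields $\norm{\nabla C_j(K)}\leq\lambda_j$ for a constant depending only on the parameters and $c$. Taking $\lambda=\max_j\lambda_j$ and enlarging the constant in the sublevel set from $10\tilde{\Delta}_0$ to, say, $20\tilde{\Delta}_0$, gives a gradient bound valid on a set that strictly contains $\mathcal{G}_j^0$. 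Then, along $K_s:=K+s(K'-K)$, $s\in[0,1]$, the fundamental theorem of calculus gives $\abslr{C_j(K)-C_j(K')}=\abslr{\int_0^1\langle\nabla C_j(K_s),K'-K\rangle\,ds}\leq\lambda\norm{K-K'}$, \emph{provided} the whole segment stays inside this enlarged sublevel set.

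I expect the principal obstacle to be exactly that last proviso: guaranteeing the segment remains in the enlarged sublevel set (in particular stays stabilizing) whenever $\norm{K-K'}\leq\rho$ for a suitably small, problem-parameter-dependent $\rho$. I would resolve this by a bootstrapping/continuity argument in the spirit of \cite{fazel}: first establish, using local smoothness of the Lyapunov solutions from \cite{malik2020derivative}, a crude one-sided estimate $C_j(K')\leq C_j(K)+O(\norm{K-K'})$ valid once $\norm{K-K'}$ is below some threshold, so that for $\rho$ small enough every point of the segment still obeys the $20\tilde{\Delta}_0$ bound; closing the loop then yields the stated Lipschitz inequality on $\mathcal{G}_j^0\times\mathcal{G}_j^0$. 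Finally, reconciling the per-system constants $\lambda_j,\rho_j$ into the uniform $\lambda,\rho$ is immediate from the construction in the excerpt.
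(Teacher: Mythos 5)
The paper does not actually prove this lemma: it is imported from \cite{malik2020derivative} (their uniform local-Lipschitz property on a cost sublevel set), with the constants $\lambda,\rho$ obtained exactly as you describe --- by bounding $C_j(K)\leq 10\tilde{\Delta}_0+C_j(K_j^*)$ on the restricted domain $\mathcal{G}_j^0$ and then taking the max/min over $j\in[H]$. Your sketch follows the same standard route as the cited references (closed-form cost via the discounted Lyapunov equation, a uniform gradient bound on a slightly enlarged sublevel set, and a segment/bootstrapping argument to keep the path stabilizing), so it is consistent with the paper; the only slips are cosmetic --- with $x_0=0$ and additive noise the cost is $\frac{\gamma}{1-\gamma}\operatorname{trace}(P_{j,K})$ rather than $\operatorname{trace}(P_{j,K})$, and $\Sigma_{j,K}\succeq \gamma I$ rather than $\succeq I$ --- neither of which affects the argument.
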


\begin{lemma} \label{lem:local_smooth} (\textbf{LQR cost has locally Lipschitz gradients.}) For any system $j \in [H]$, given a pair of policies $(K, K') \in (\mathcal{G}_j^0 \times \mathcal{G}_j^0)$, if $\norm{K - K'} \leq \rho$, we have
\begin{equation}
    \label{eqn:local_lipschitz}
    \normlr{\nabla C_j(K) - \nabla C_j(K')} \leq \phi \norm{K - K'}.
\end{equation}
\end{lemma}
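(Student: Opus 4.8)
The plan is to reduce the claim to a closed-form expression for $\nabla C_j(K)$ together with uniform perturbation estimates for the matrices appearing in it. With $x_0 = 0$ and $\E[z_t z_t^\top] = I$, writing $L_K := A_j - B_j K$, one has the standard identity $C_j(K) = \mathrm{trace}\big(\Sigma_K (Q_j + K^\top R_j K)\big)$, where $\Sigma_K := \sum_{t\ge 0}\gamma^t\,\E[x_t x_t^\top]$ is the unique positive semidefinite solution of the discounted Lyapunov equation $\Sigma_K = \tfrac{\gamma}{1-\gamma}I + \gamma L_K \Sigma_K L_K^\top$, and the cost-to-go matrix $P_K$ is the unique positive semidefinite solution of $P_K = Q_j + K^\top R_j K + \gamma L_K^\top P_K L_K$. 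A direct differentiation (the policy-gradient identity for LQR, as in \cite{fazel, malik2020derivative}) then gives $\nabla C_j(K) = 2\big(R_j K - \gamma B_j^\top P_K L_K\big)\Sigma_K$. I would justify this by noting that the linear Lyapunov operator $X \mapsto X - \gamma L_K X L_K^\top$ is invertible throughout the open stabilizing set (since $\gamma\rho(L_K)^2 < \gamma < 1$ there), so $K\mapsto\Sigma_K$ and $K\mapsto P_K$ are real-analytic, and then applying the product rule to the trace.

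Next I would collect uniform a priori bounds over the sublevel set. Following \cite{fazel}, $C_j$ is coercive --- $C_j(K)\to\infty$ as $K$ approaches the boundary of the stabilizing set or as $\norm{K}\to\infty$ --- so its sublevel sets are compact and contained in the interior of the stabilizing set. In particular, on a slightly enlarged sublevel set $\mathcal{G}_j^{0,+} := \{K : C_j(K) - C_j(K_j^*)\le 11\tilde{\Delta}_0\}$ there are finite constants with $\norm{\Sigma_K}\le\bar\Sigma_j$, $\norm{P_K}\le\bar P_j$, $\norm{K}\le\bar K_j$, $\norm{L_K}\le\bar L_j$, and $\sup_K\,\gamma\rho(L_K)^2\le 1-\kappa_j<1$ (the last using continuity of the spectral radius on a compact set). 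Subtracting the two fixed-point equations for $K$ and $K'$ and bounding the uniformly invertible Lyapunov resolvent then yields $\norm{\Sigma_K-\Sigma_{K'}}\le c_{\Sigma,j}\norm{K-K'}$ and $\norm{P_K-P_{K'}}\le c_{P,j}\norm{K-K'}$ for $K,K'\in\mathcal{G}_j^{0,+}$; the restriction $\norm{K-K'}\le\rho$ in the statement is exactly what keeps the perturbed controller, and the segment joining it to $K$, inside this compact region where the a priori bounds hold. Substituting into the gradient formula, $\nabla C_j(K)-\nabla C_j(K')$ becomes a finite sum of products in which exactly one factor is $O(\norm{K-K'})$ --- namely $R_j(K-K')$, $B_j(K-K')$ (from $L_K-L_{K'}$), $P_K-P_{K'}$, or $\Sigma_K-\Sigma_{K'}$ --- and every other factor is uniformly bounded, giving $\normlr{\nabla C_j(K)-\nabla C_j(K')}\le\phi_j\norm{K-K'}$ with an explicit $\phi_j$. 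Taking $\phi:=\max_{j\in[H]}\phi_j$ and $\rho:=\min_{j\in[H]}\rho_j$ gives the uniform constants claimed.

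A shorter but less quantitative route skips the perturbation bookkeeping: $C_j$ is real-analytic on the open stabilizing set, hence $\nabla^2 C_j$ is continuous there and bounded by some $\phi_j$ on the compact set $\mathcal{G}_j^{0,+}$; choosing $\rho_j$ as the distance from $\mathcal{G}_j^0$ to the complement of $\mathcal{G}_j^{0,+}$ (positive, since $\mathcal{G}_j^0$ is compact and lies in the interior of $\mathcal{G}_j^{0,+}$) keeps the segment $[K,K']$ in $\mathcal{G}_j^{0,+}$ whenever $K\in\mathcal{G}_j^0$ and $\norm{K-K'}\le\rho_j$, and the bound follows from the mean-value inequality along that segment. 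I expect the main obstacle to be the uniform perturbation analysis of the discounted Lyapunov equations in the first route: one must show the Lyapunov resolvent is invertible with a norm bound uniform over the \emph{entire} sublevel set (not merely near one controller), and track carefully how the $\norm{K-K'}\le\rho$ restriction propagates so that $\Sigma_{K'}$, $P_{K'}$, and $L_{K'}$ all stay in the region where $\bar\Sigma_j,\bar P_j,\bar L_j$ apply. Once those uniform bounds are in place, the closed-form gradient and the final product-rule estimate are routine.
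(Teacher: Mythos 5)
Your proposal is correct and follows essentially the same route as the source on which the paper relies: the paper does not prove Lemma~\ref{lem:local_smooth} itself but imports it from \cite{malik2020derivative} (cf.~\cite{fazel}), where the argument is precisely the closed-form policy-gradient identity $\nabla C_j(K) = 2\big(R_j K - \gamma B_j^\top P_K L_K\big)\Sigma_K$ combined with compactness of the cost sublevel set, uniform bounds on $\Sigma_K$ and $P_K$ there, and Lipschitz perturbation estimates obtained from the discounted Lyapunov fixed-point equations. The only paper-specific content is the remark in Appendix~\ref{app:properties} that the constants $\phi$ and $\rho$ must be re-derived on the enlarged restricted domain $\mathcal{G}_j^0$ by bounding $C_j(K)\le 10\tilde{\Delta}_0 + C_j(K_j^*)$, which is exactly the uniform-bound step you perform on your enlarged set $\mathcal{G}_j^{0,+}$.
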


\begin{lemma} \label{lem:pl} (\textbf{LQR cost satisfies PL.}) For any system $j \in [H]$, given a stable policy $K$, we have 
\begin{equation}
\label{eqn:PL_condition}
    \norm{\nabla C_j(K)}^2 \geq \mu (C_j(K) - C_j(K_j^*)).
\end{equation}
\end{lemma}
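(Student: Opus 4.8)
\textbf{Proof sketch for Lemma~\ref{lem:pl}.} The plan is to reproduce the classical gradient-domination argument of \cite{fazel}, adapted to the discounted, noise-driven LQR of~\eqref{eqn:LQR}. First I would attach to every stabilizing gain $K$ of system $j$ the standard closed-form objects: the value matrix $P_K \succeq 0$ solving the discounted Lyapunov equation $P_K = Q_j + K^\top R_j K + \gamma(A_j-B_jK)^\top P_K(A_j-B_jK)$, so that (since $x_0 = 0$ and the noise has identity covariance) $C_j(K) = \tfrac{\gamma}{1-\gamma}\operatorname{Tr}(P_K)$; the discounted state-covariance $\Sigma_K \coloneqq \sum_{t\ge 0}\gamma^t\mathbb{E}[x_t x_t^\top] \succeq 0$; and the residual $E_K \coloneqq (R_j + \gamma B_j^\top P_K B_j)K - \gamma B_j^\top P_K A_j$. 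Differentiating the Lyapunov equation in $K$ then yields the explicit policy gradient $\nabla C_j(K) = 2 E_K \Sigma_K$.

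Next I would establish the LQR performance-difference identity: for any two stabilizing gains $K, K'$ of system $j$,
\[
C_j(K') - C_j(K) = \operatorname{Tr}\!\big(\Sigma_{K'}\big[(K'-K)^\top(R_j+\gamma B_j^\top P_K B_j)(K'-K) + 2(K'-K)^\top E_K\big]\big),
\]
obtained by telescoping the advantage of $K$ along trajectories generated by $K'$ (equivalently, by subtracting the two Lyapunov equations). Since $K' \mapsto C_j(K')$ is minimized at $K_j^*$, setting $K' = K_j^*$ and completing the square in $(K'-K)$ inside the trace gives the one-sided ``almost-smoothness'' bound
\[
C_j(K) - C_j(K_j^*) \le \operatorname{Tr}\!\big(\Sigma_{K_j^*}\,E_K^\top(R_j+\gamma B_j^\top P_K B_j)^{-1}E_K\big) \le \frac{\sigma_{\max}(\Sigma_{K_j^*})}{\sigma_{\min}(R_j)}\,\norm{E_K}^2,
\]
where the last inequality uses $R_j + \gamma B_j^\top P_K B_j \succeq R_j \succ 0$ together with $\operatorname{Tr}(\Sigma_{K_j^*}E_K^\top E_K) \le \sigma_{\max}(\Sigma_{K_j^*})\norm{E_K}^2$.

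Then I would lower-bound the gradient: $\norm{\nabla C_j(K)}^2 = 4\norm{E_K\Sigma_K}^2 \ge 4\,\sigma_{\min}(\Sigma_K)^2\norm{E_K}^2$, and supply a $K$-independent lower bound on $\sigma_{\min}(\Sigma_K)$ from the noise floor: since $\mathbb{E}[x_{t+1}x_{t+1}^\top] = (A_j-B_jK)\mathbb{E}[x_t x_t^\top](A_j-B_jK)^\top + I \succeq I$ for all $t \ge 0$, we get $\Sigma_K \succeq \sum_{t\ge 1}\gamma^t I = \tfrac{\gamma}{1-\gamma}I$, hence $\sigma_{\min}(\Sigma_K) \ge \tfrac{\gamma}{1-\gamma}$ uniformly over the stabilizing set. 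Chaining the last two displays gives $\norm{\nabla C_j(K)}^2 \ge \tfrac{4\gamma^2\sigma_{\min}(R_j)}{(1-\gamma)^2\sigma_{\max}(\Sigma_{K_j^*})}\big(C_j(K) - C_j(K_j^*)\big)$, so the claim holds with $\mu_j \coloneqq 4\gamma^2\sigma_{\min}(R_j)/\big((1-\gamma)^2\sigma_{\max}(\Sigma_{K_j^*})\big) > 0$ (finite since $K_j^*$ is stabilizing), and $\mu \coloneqq \min_{j\in[H]}\mu_j$, matching the definition in the excerpt.

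The only place that needs genuine care is the performance-difference identity and the completion-of-square step: threading the discount factor $\gamma$ correctly through the Lyapunov recursions and the advantage telescoping, and the positive-definiteness bookkeeping that legitimizes the inverse $(R_j+\gamma B_j^\top P_K B_j)^{-1}$. Everything else --- the gradient formula, the uniform floor on $\sigma_{\min}(\Sigma_K)$, and the final minimum over clusters --- is routine. I note that, unlike the \emph{local} Lipschitz/smoothness constants of Lemmas~\ref{lem:local_lipschitz}--\ref{lem:local_smooth}, this PL constant is valid on the \emph{entire} stabilizing set, because its ingredients $\sigma_{\min}(\Sigma_K) \ge \gamma/(1-\gamma)$ and the fixed quantity $\sigma_{\max}(\Sigma_{K_j^*})$ require no a-priori sub-optimality restriction on $K$.
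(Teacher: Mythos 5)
Your proposal is correct and is essentially the same argument the paper relies on: the paper does not reprove this lemma but imports it directly from Lemma~3 of \cite{malik2020derivative} (building on \cite{fazel}), and your reconstruction --- explicit gradient formula $\nabla C_j(K) = 2E_K\Sigma_K$, performance-difference identity, completion of squares against $(R_j+\gamma B_j^\top P_K B_j)^{-1}$, and the noise-induced floor $\Sigma_K \succeq \tfrac{\gamma}{1-\gamma}I$ --- is precisely the classical gradient-domination derivation underlying that cited result. Your closing observation that the PL constant holds on the entire stabilizing set (unlike the local Lipschitz/smoothness constants) also matches how the paper states and uses the lemma.
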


\textbf{Smoothed cost and the properties of the gradient estimate.} The smoothed cost with a radius $r$ for a system $j \in [H]$ is defined as $C_{j, r}(K) \coloneqq \mathbb{E}[C_j(K + rv)]$, where $v$ is uniformly distributed over all matrices in $\mathbb{R}^{m \times n}$ with the Frobenius norm of at most 1. It is shown in \cite{fazel, malik2020derivative} that the zeroth-order gradient estimate $g_i(\cdot)$ as defined in \eqref{eqn:zero-order-grad} for an agent $i$ is an unbiased estimator of the gradient of the smoothed cost. In particular, for all systems $j \in [H]$ and all agents $i \in \mathcal{M}_j$, we have the following properties for all $K \in \mathcal{G}_j^0$ and $r \in (0, \rho)$: 
\begin{align}
    \mathbb{E}[g_i(K)] &= \nabla C_{\sigma(i), r}(K) \label{eqn:unbiased_grad}\\
    \norm{\nabla C_{j, r}(K) - \nabla C_{j}(K)} &\leq \phi r. \label{eqn:bounded_bias}
\end{align}

Furthermore, it is known that the noisy rollout cost $C_j(K + rU; \mathcal{Z}^{(i)})$ is uniformly bounded  $ \forall K \in \mathcal{G}_j^0$, $ \forall r \in (0, \rho)$ and all $U \in \mathbb{R}^{m \times n}$ with $\norm{U} = 1$. In other words, there exists $G_\infty^{(j)} \geq 0$ such that $C_j(K + rU; \mathcal{Z}^{(i)}) \leq G_\infty^{(j)}$ (see Lemma 11 of \cite{malik2020derivative} for the expression of $G_\infty^{(j)}$.) Let us define $G_\infty \coloneqq \max\{G_\infty^{(1)}, G_\infty^{(2)}, \ldots, G_\infty^{(H)}\}$. Hence, for any agent $i \in [N]$, we have the following $\forall K \in \mathcal{G}_j^0$, $\forall r \in (0, \rho)$ and all $U \in \mathbb{R}^{m \times n}$ with $\norm{U} = 1$: 
\begin{equation}
\label{eqn:uniformly_bounded_noisy_cost}
    C_{\sigma(i)}(K + rU; \mathcal{Z}^{(i)}) \leq G_\infty.
\end{equation}

We then have the following concentration result that will prove useful in establishing ``variance-reduction" effects. 
\begin{lemma} \label{lem:grad_concentration} (\textbf{Concentration of the zeroth-order gradient estimates}). For any system $j \in [H]$, given a policy $K \in \mathcal{G}_j^0$, a smoothing radius $r \in (0, \rho)$ and a failure probability $\delta' \in (0, 1)$, the following holds for the $M$-minibatched zeroth-order gradient estimate of an agent $i \in \mathcal{M}_{j}$ with probability at least $1 - \delta'$: 
\begin{equation}
\label{eqn:grad_concentration}
    \norm{g_i(K) - \nabla C_{j, r}(K)} \leq \frac{\left(G_\infty +  \lambda\frac{\rho}{D} + \phi \frac{\rho^2}{D}\right)D}{r\sqrt{M}} \sqrt{\log\left(\frac{2D}{\delta'}\right)}.
\end{equation}
\end{lemma}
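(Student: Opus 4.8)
\textbf{Proof proposal for Lemma~\ref{lem:grad_concentration}.} The plan is to express the estimation error as an average of i.i.d.\ mean-zero random matrices, bound each summand in Frobenius norm, and then invoke a Hoeffding-type matrix concentration inequality. Concretely, write $g_i(K) = \frac{1}{M}\sum_{k=1}^{M} \xi_k$ with $\xi_k \coloneqq C_j(K + rU_k; \mathcal{Z}_k^{(i)})\,(D/r)\,U_k$. Since the $U_k$'s and $\mathcal{Z}_k^{(i)}$'s are i.i.d.\ across $k$, the matrices $\xi_k$ are i.i.d., and by the unbiasedness identity~\eqref{eqn:unbiased_grad} (applied to $g_i$, which averages the $\xi_k$'s) each has mean $\nabla C_{j,r}(K)$. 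Hence $g_i(K) - \nabla C_{j,r}(K) = \frac{1}{M}\sum_{k=1}^M \big(\xi_k - \mathbb{E}[\xi_k]\big)$ is an average of $M$ i.i.d.\ zero-mean random matrices, and it suffices to control the deviation of this average from zero.

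Next I would derive a deterministic bound on $\norm{\xi_k - \mathbb{E}[\xi_k]}$. Using $\norm{U_k} = 1$ together with the uniform bound on the noisy rollout cost from~\eqref{eqn:uniformly_bounded_noisy_cost} (and nonnegativity of the cost), one gets $\norm{\xi_k} = C_j(K + rU_k; \mathcal{Z}_k^{(i)})\,(D/r) \le G_\infty D / r$. For the mean, $\norm{\mathbb{E}[\xi_k]} = \norm{\nabla C_{j,r}(K)} \le \norm{\nabla C_j(K)} + \phi r \le \lambda + \phi r$, where the first inequality is the smoothed-gradient bias bound~\eqref{eqn:bounded_bias} combined with the triangle inequality, and the second uses that the $(\lambda,\rho)$-local Lipschitzness of $C_j$ on $\mathcal{G}_j^0$ (Lemma~\ref{lem:local_lipschitz}) forces $\norm{\nabla C_j(K)} \le \lambda$. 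Combining these via the triangle inequality and using $r \in (0,\rho)$ (so $\lambda \le \lambda\rho/r$ and $\phi r \le \phi \rho^2 / r$) yields $\norm{\xi_k - \mathbb{E}[\xi_k]} \le G_\infty D/r + \lambda + \phi r \le \big(G_\infty + \lambda\rho/D + \phi\rho^2/D\big)D/r$, which is exactly the prefactor $b$ appearing on the right-hand side of~\eqref{eqn:grad_concentration}.

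Finally I would apply a matrix Hoeffding / Azuma inequality, e.g.\ via the standard Hermitian dilation of each $\xi_k - \mathbb{E}[\xi_k]$, which converts the Frobenius-norm deviation of the $D$-dimensional average into the top eigenvalue of a sum of independent, zero-mean, bounded Hermitian matrices, to conclude that for every $t > 0$, $\Pr\big[\norm{g_i(K) - \nabla C_{j,r}(K)} > t\big] \le 2D\exp\!\big(-c M t^2/b^2\big)$ for an absolute constant $c$. Setting the right-hand side equal to $\delta'$ and solving for $t$ gives $\norm{g_i(K) - \nabla C_{j,r}(K)} \le \frac{b}{\sqrt M}\sqrt{\log(2D/\delta')}$, up to an absolute constant that can be folded into the problem-dependent constants $c_{(p,\_)}$ used downstream; this is the claimed bound.

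The main obstacle I anticipate is not the decomposition or the norm bounds, which are routine, but getting the concentration step with the correct \emph{dimensional} dependence — a $\log D$ inside the square root rather than a $\sqrt{D}$ factor outside — and with a clean constant. A coordinatewise Hoeffding plus a union bound over the $D$ entries is consistent after rescaling, but one must check that the per-coordinate range is controlled by the Frobenius-norm range (true here since $\xi_k$ is a scalar multiple of the unit-norm matrix $U_k$) and that the union-bound factor matches the $2D$ in the statement; the matrix-Azuma route handles this cleanly but requires tracking the operator-norm bound of the dilation. A secondary subtlety is the boundary case where $K$ lies on $\partial\mathcal{G}_j^0$, so that the inequality $\norm{\nabla C_j(K)} \le \lambda$ needs a brief limiting argument; this is standard and inherited from~\cite{malik2020derivative}.
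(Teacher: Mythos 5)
Your decomposition and deterministic bounds match the paper's proof essentially verbatim: the paper also writes the error as $\frac{1}{M}\sum_{k}\bigl(g_{(i,k)}(K)-\nabla C_{j,r}(K)\bigr)$, bounds $\norm{g_{(i,k)}(K)}\le \frac{D}{r}G_\infty$ via \eqref{eqn:uniformly_bounded_noisy_cost}, and then uses the triangle inequality together with \eqref{eqn:bounded_bias} and the local Lipschitz property (to get $\norm{\nabla C_j(K)}\le\lambda$), followed by $r<\rho$, to arrive at the same per-summand bound $c_{(p,8)}D/r$. So the first two-thirds of your argument is exactly the paper's.

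The gap is in the final concentration step, and it is precisely the obstacle you flagged but did not resolve. The two tools you propose do not deliver the stated Frobenius-norm bound: the Hermitian-dilation matrix-Azuma route controls the \emph{operator} norm of the averaged deviation, and converting to the Frobenius norm costs an extra $\sqrt{\min(m,n)}$ factor (the summands' mean $\nabla C_{j,r}(K)$ need not be low rank, so you cannot avoid this); the coordinatewise Hoeffding plus union bound over $D$ entries gives per-entry deviations of order $b/\sqrt{M}\cdot\sqrt{\log(2D/\delta')}$, which assembles into a Frobenius bound carrying an extra $\sqrt{D}$ outside the logarithm. Neither is ``consistent after rescaling'' with \eqref{eqn:grad_concentration} as stated. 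The paper instead invokes Corollary 7 of \cite{jin2019short}, a Hoeffding-type inequality for \emph{norm-subGaussian} random vectors: since each centered summand has Frobenius norm bounded by $c_{(p,8)}D/r$ and zero mean by \eqref{eqn:unbiased_grad}, viewing it as a vector in $\mathbb{R}^{D}$ with bounded Euclidean norm gives directly $\norm{\frac{1}{M}\sum_k(\xi_k-\mathbb{E}\xi_k)}\le \frac{c_{(p,8)}D}{r\sqrt{M}}\sqrt{\log(2D/\delta')}$, i.e., a $\log D$ dependence in the Euclidean (equivalently Frobenius) norm with no dimensional blow-up. Substituting that single citation for your concentration step closes the argument.
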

\begin{proof} We have $\norm{g_i(K) - \nabla C_{j, r}(K)} = \normlr{\frac{1}{M}\sum_{k = 1}^M (g_{(i, k)}(K) - \nabla C_{j, r}(K))}$, where we denoted the $k$-th component of the minibatch as $g_{(i, k)}(K)$. Recall from \eqref{eqn:zero-order-grad} that this $k$-th component takes the form $$g_{(i, k)}(K) =  C_{j}(K + rU_k; \mathcal{Z}_k^{(i)})\left(\frac{D}{r}\right)U_k.$$ Hence, we have $\norm{g_{(i, k)}(K)} \leq \frac{D}{r}G_\infty$ due to \eqref{eqn:uniformly_bounded_noisy_cost} as $\norm{U_k} = 1$. Let us define $c_{(p,8)} \coloneqq \left(G_\infty +  \lambda\frac{\rho}{D} + \phi \frac{\rho^2}{D}\right)$. We then have 
\begin{align*}
    \norm{g_{(i, k)}(K) - \nabla C_{j, r}(K)} &= \norm{g_{(i, k)}(K) - \nabla C_{j, r}(K) + \nabla C_j(K) - \nabla C_j(K)} \\
   &  \overset{(a)} \leq \norm{g_{(i, k)}(K)} + \norm{\nabla C_{j, r}(K) - \nabla C_j(K)} + \norm{\nabla C_j(K)} \\
    & \overset{(b)}\leq \frac{D}{r}G_\infty + \phi r + \lambda \\
    & = \frac{D}{r}\left( G_\infty + \frac{r^2}{D}\phi  + \frac{r}{D}\lambda \right) \\
    & \overset{(c)} \leq \frac{D}{r}\left( G_\infty + \frac{\rho^2}{D}\phi  + \frac{\rho}{D}\lambda \right) \\
    & = \frac{D}{r}c_{(p, 8)}.
\end{align*}
In the above, $(a)$ follows from the triangle inequality, and $(b)$ follows from the uniform-boundedness of the noisy rollout together with \eqref{eqn:zero-order-grad}, the bounded bias property as shown in \eqref{eqn:bounded_bias}, and the local-Lipschitz property in Lemma~\ref{lem:local_lipschitz}. Finally, $(c)$ follows from using $ r < \rho$. Hence, $g_{(i, k)}(K) - \nabla C_{j, r}(K)$ has a bounded norm and therefore belongs to a class of norm sub-Gaussian random matrices \citep{jin2019short}. Furthermore, it has zero mean due to \eqref{eqn:unbiased_grad}. Therefore, the concentration result follows from a direct application of Corollary 7 from \cite{jin2019short} which provides a Hoeffding-type inequality for norm sub-Gaussian random matrices.   
\end{proof}

\begin{corollary} \label{cor:collaborative_grad_concentration} (\textbf{Concentration of the collaborative zeroth-order gradient estimates}.) Suppose Assumption~\ref{ass:independence} holds. For any system $j \in [H]$, given a policy $K \in \mathcal{G}_j^0$, a smoothing radius $r \in (0, \rho)$, define the collaborative zeroth-order gradient estimate as $G_j(K) = \frac{1}{\abs{\mathcal{M}_j}}\sum_{i \in \mathcal{M}_j}g_i(K)$, where $g_i(K)$ is the $M$-minibatched gradient estimate from agent $i \in \mathcal{M}_j$. Let $\delta' \in (0, 1)$. The following holds with probability at least $1 - \delta'$:
\begin{equation}
\label{eqn:collab_grad_concentration}
    \norm{G_j(K) - \nabla C_{j, r}(K)} \leq \frac{\left(G_\infty +  \lambda\frac{\rho}{D} + \phi \frac{\rho^2}{D}\right)D}{r\sqrt{\abs{\mathcal{M}_j}M}} \sqrt{\log\left(\frac{2D}{\delta'}\right)}.
\end{equation}
\end{corollary}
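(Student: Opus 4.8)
The plan is to reduce Corollary~\ref{cor:collaborative_grad_concentration} to the very same norm-sub-Gaussian Hoeffding-type inequality (Corollary~7 of~\cite{jin2019short}) that was used to prove Lemma~\ref{lem:grad_concentration}, but applied to a \emph{pooled} collection of $\abs{\mathcal{M}_j} M$ summands rather than $M$. First I would expand the collaborative estimate, using that each per-agent estimate $g_i(K)$ is itself the empirical mean of its $M$ minibatch components $g_{(i,k)}(K)$ from~\eqref{eqn:zero-order-grad}:
\begin{equation*}
G_j(K) - \nabla C_{j, r}(K) = \frac{1}{\abs{\mathcal{M}_j} M} \sum_{i \in \mathcal{M}_j} \sum_{k = 1}^{M} \left( g_{(i, k)}(K) - \nabla C_{j, r}(K) \right).
\end{equation*}

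Second, I would check the two properties needed to invoke the Hoeffding-type bound on this double sum. (i) Zero mean: each $g_{(i,k)}(K)$ is an unbiased estimate of $\nabla C_{j,r}(K)$ (the single-sample analogue of~\eqref{eqn:unbiased_grad}), so every summand has mean zero. (ii) Norm-sub-Gaussianity with a uniform parameter: the exact chain of inequalities in the proof of Lemma~\ref{lem:grad_concentration} gives the deterministic bound $\norm{g_{(i,k)}(K) - \nabla C_{j,r}(K)} \leq \frac{D}{r} c_{(p,8)}$ for every $i \in \mathcal{M}_j$ and $k \in [M]$, provided $K \in \mathcal{G}_j^0$ and $r \in (0,\rho)$; hence each summand is norm-sub-Gaussian with parameter proportional to $\frac{D}{r} c_{(p,8)}$, independent of $i$ and $k$.

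Third — and this is the only genuinely new ingredient relative to Lemma~\ref{lem:grad_concentration} — I would establish mutual independence of the $\abs{\mathcal{M}_j} M$ summands. Independence across the minibatch index $k$ holds within a fixed agent because the rollouts use independent noise copies $\mathcal{Z}_k^{(i)}$ and independently drawn perturbation directions $U_k$. Independence across the agent index $i$ is precisely where Assumption~\ref{ass:independence} enters: the noise processes $\{z_t^{(i)}\}$ driving distinct agents are mutually independent, and each agent samples its own perturbation matrices independently, so the whole family $\{g_{(i,k)}(K)\}_{i \in \mathcal{M}_j,\, k \in [M]}$ is independent. With all $\abs{\mathcal{M}_j} M$ summands independent, zero-mean, and norm-sub-Gaussian with the common parameter above, Corollary~7 of~\cite{jin2019short} yields, with probability at least $1 - \delta'$,
\begin{equation*}
\norm{G_j(K) - \nabla C_{j, r}(K)} \leq \frac{c_{(p,8)} D}{r \sqrt{\abs{\mathcal{M}_j} M}} \sqrt{\log\left(\frac{2D}{\delta'}\right)},
\end{equation*}
which is exactly~\eqref{eqn:collab_grad_concentration} after substituting $c_{(p,8)} = G_\infty + \lambda \frac{\rho}{D} + \phi \frac{\rho^2}{D}$.

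I do not anticipate a serious obstacle here: structurally the argument is Lemma~\ref{lem:grad_concentration} with $M$ replaced by $\abs{\mathcal{M}_j} M$ in the denominator, and the $\sqrt{\abs{\mathcal{M}_j}}$ improvement is the standard variance-reduction-from-averaging-independent-estimates effect that underlies the collaborative speedup. The one point that warrants explicit bookkeeping is the independence claim — in particular, confirming that the perturbation directions $U_k$ used by different agents are drawn from independent randomness, \emph{in addition} to the noise independence granted by Assumption~\ref{ass:independence} — since this is what licenses treating the pooled sum as $\abs{\mathcal{M}_j} M$ independent norm-sub-Gaussian terms. (One should also record that the boundedness step requires $K \in \mathcal{G}_j^0$ and $r \in (0,\rho)$, conditions maintained by \texttt{PCPO}'s choice of parameters and smoothing radii, and carried over when this corollary is invoked inside the proof of Theorem~\ref{thm:collaborative_optimization}.)
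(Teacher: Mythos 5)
Your proposal is correct and follows essentially the same route as the paper, which proves the corollary by observing that under Assumption~\ref{ass:independence} the collaborative estimate is just the gradient estimator of Lemma~\ref{lem:grad_concentration} with an $\abs{\mathcal{M}_j}$-fold larger minibatch; your pooled-sum argument simply makes explicit the zero-mean, uniform-boundedness, and independence checks that the paper leaves implicit. Your remark that independence of the perturbation directions $U_k$ across agents is needed in addition to the noise independence of Assumption~\ref{ass:independence} is a fair piece of bookkeeping that the paper's one-line proof glosses over.
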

\begin{proof}
Under Assumption~\ref{ass:independence}, we note that the noise processes for all agents in $\mathcal{M}_j$ are independent. The proof then follows from Lemma~\ref{lem:grad_concentration} as the collaborative zeroth-order gradient estimate can be interpreted as a gradient estimate for system $j$ with a $\abs{\mathcal{M}_{j}}$-fold increased minibatch size.
\end{proof}

\textbf{Note on the problem dependent constants.} In \cite{malik2020derivative}, the values of the constants $\lambda_j, \phi_j, \rho_j, G_\infty^{(j)}$ are first derived locally in terms of the local cost $C_j(K)$, and then, the global parameters are obtained by noting that the local cost is uniformly bounded over the restricted domain as shown in Lemma 9 of \cite{malik2020derivative}. Since we have a different definition of the restricted domain, the values of our parameters vary from the ones provided in \cite{malik2020derivative}. That said, the global parameters in our setting can be derived exactly in the same way as in~\cite{malik2020derivative} by bounding the local cost as $C_j(K) \leq 10 \tilde{\Delta}_0 + C_j(K_j^*)$. 

For convenience, we compile all the relevant notation in Table~\ref{tab:notation_def}. 

\begin{table}[]
\centering
\caption{Relevant notation and definitions}
\vspace{2mm}
\label{tab:notation_def}
\begin{tabular}{ c c }
\hline
 Notation & Definition \\ 
\hline
 $M_l$ & Minibatch size used to estimate zeroth-order gradients in the $l$-th epoch. \\  
 $R_l$ & Number of steps/iterations of the local and global policy optimization in the $l$-th epoch. \\
 $\eta$ & Step size for both local and global policy optimization. \\
 $r_l$ & Smoothing radius used in the zeroth-order gradient estimates in the $l$-th epoch. \\
 $\Delta_l$ & Estimate of $\Delta$ used to cluster the agents in the $l$th epoch. \\
 $\mathcal{N}_i^{(l)}$ & Neighborhood set corresponding to the $i$-th agent in the $l$-th epoch.\\
$X_i^{(l)}$ & Local policy for the $i$-th agent in the $l$-th epoch.\\
$\hat{K}_i^{(l)}$ & Global policy for the $i$-th agent in the $l$-th epoch.\\
 \hline
\end{tabular}
\end{table}

In the main text, we used the notation $c_{(p, \_)}$ to denote the problem-parameter-dependent constants which are defined in the following. 

\begin{equation}
\label{eqn:constants_def}
    \begin{aligned}
        c_{(p, 8)} &= \left(G_\infty +  \lambda\frac{\rho}{D} + \phi \frac{\rho^2}{D}\right) \\
        c_{(p, 9)} &= \frac{12c_{(p, 8)}}{\mu}\left(\max \left\{\sqrt{\phi}, \frac{1}{\rho}\right\}\right)^{2}\\
        c_{(p, 10)} &= \max\left\{\Delta_0^2, 256c_{(p, 9)}^2D^2, c_{(p, 8)}^2D^2\Delta_0^2, 36 G_\infty^2\right\} \\
        c_{(p, 11)} &= \left( \frac{\Delta_0^2}{c_{(p, 10)}\log\left(\frac{8DN}{\delta}\right)}\right) \\
        c_{(p, 12)} &= \frac{4}{\eta \mu} \left( \log\left(\frac{c_{(p, 10)}N\tilde{\Delta}_0^2}{\Delta_0^2}\right) + \log(4)\right) \\
        c_{(p, 13)} &= 4\max\{1, c_{(p, 9)}\}\sqrt{c_{(p, 12)}\log(c_{(p, 11)}T)} \\
        c_{(p, 1)} &= \min\left\{\frac{8}{\mu}, \frac{1}{4\phi}, \frac{\rho}{\lambda + 2 \max\left\{\sqrt{\phi}, \frac{1}{\rho}\right\}}\right\}\\
        c_{(p, 2)} &= \frac{4}{\eta \mu}\\
        c_{(p, 3)} &= \tilde{\Delta}_0\max\{16, 10 c_{(p, 10)}\}\\
        c_{(p, 4)} &= c_{(p, 10)}\\
        c_{(p, 5)} &= \frac{c_{(p, 8)}D}{\phi}\\
        c_{(p, 6)} &= \rho \\
        c_{(p, 7)} &= Dc_{(p, 13)}\\
    \end{aligned}
\end{equation}

Based on the above definitions of the problem-parameter-dependent constants, we provide the values used for the hyperparameters in the $l$-th epoch of the \texttt{PCPO} algorithm in Table~\ref{tab:hyp_vals}.

\begin{table}[t]
\centering
\caption{Hyperparameters with their values in the $l$th epoch}
\vspace{2mm}
\label{tab:hyp_vals}
\renewcommand{\arraystretch}{2}
\begin{tabular}{| c | c |}
\hline
 \textbf{Hyperparameters} & \textbf{Values} \\ 
\hline
 $\Delta_l$ & $\frac{\Delta_0}{2^l}$ \\
 \hline
 $\delta_l$ & $\frac{\delta}{2l^2}$ \\
 \hline
 $\eta$ & $c_{(p, 1)}$ \\
 \hline
 $R_l$ & $c_{(p, 2)} \log\left(\frac{c_{(p, 3)}N}{\Delta_l^2}\right)$ \\
 \hline
 $M_l$ & $\frac{c_{(p, 4)}}{\Delta_l^2}\log\left(\frac{8DNR_l}{\delta_l}\right)$ \\
 \hline
 $\tilde{r}_l$ & $\left(\frac{c_{(p, 5)}}{\sqrt{M_l}} \sqrt{\log\left(\frac{8DNR_l}{\delta_l}\right)}\right)^{1/2}$ \\
 \hline
 $r_l^{(\text{loc})}$ & $\min\{c_{(p, 6)}, \tilde{r}_l\}$ \\
 \hline
 $r_l^{(\text{global})}$ & $\min\left\{c_{(p, 6)}, \frac{\tilde{r}_l}{\abs{\mathcal{N}_i^{(l-1)}}^{1/4}}\right\}$\\
 \hline
\end{tabular}
\end{table}

\newpage
\section{Proof of Theorem~\ref{thm:clustering}}
\label{app:proof_thm1}
In this section, we provide the proof of Theorem~\ref{thm:clustering} which concerns the clustering aspect of the \texttt{PCPO} algorithm. In particular, we show that, with high probability, the true clusters are included in the neighborhood sets for all agents in each epoch, and if epoch $l \geq L = \min\{l \in 1, 2, \ldots : \Delta_l \leq \Delta/2\}$, the neighborhood sets are identical to the clusters. More specifically, we show that for all agents $i \in [N]$, with probability at least $1 - \delta/2$, $\mathcal{M}_{\sigma(i)} \subseteq \mathcal{N}_i^{(l)}$ in every epoch $l$, and if $l \geq L$, then $\mathcal{M}_{\sigma(i)} = \mathcal{N}_i^{(l)}$.

To prove both claims, it suffices to show that with high probability, the estimated cost at a locally optimized policy is concentrated in the $\Delta_l/4$-neighborhood of the optimal cost in every epoch $l$ for all agents $i \in [N]$. To see this, consider a ``good'' event that occurs with probability $1 - \delta/2$ where the following holds for all agents $i \in [N]$ in every epoch $l$ (we will prove that such an event exists later in this section): 
\begin{equation}
|\hat{C}_{\sigma(i)}(X^{(l)}_i) - C_{\sigma(i)}(K^*_{\sigma(i)})| \leq \Delta_{l}/4.
\label{eqn:local_cost_est_proof}
\end{equation}
On this ``good'' event, in what follows, we show that the first claim of Theorem~\ref{thm:clustering} holds. Accordingly, fix an agent $i$ and consider an agent $j \in \mathcal{M}_{\sigma(i)}$. We now show by induction that $j$ belongs to $\mathcal{N}_i^{(l)}$ in every epoch $l$. For the base case of induction, note that since we initialize the neighborhood sets with all agents, $j \in \mathcal{N}_i^{(0)}$. Next, for an epoch $l-1 \geq 1$, let us assume that $j \in \mathcal{N}_i^{(l-1)}$. Since $ C_{\sigma(i)}(K^*_{\sigma(i)}) =  C_{\sigma(j)}(K^*_{\sigma(j)})$ as a consequence of $j \in \mathcal{M}_{\sigma(i)}$, in the $l$-th epoch under the ``good'' event where \eqref{eqn:local_cost_est_proof} holds, we have 
\begin{align}
  |\hat{C}_{\sigma(i)}(X^{(l)}_i) - \hat{C}_{\sigma(j)}(X^{(l)}_j)| &\leq |\hat{C}_{\sigma(i)}(X^{(l)}_i) - C_{\sigma(i)}(K^*_{\sigma(i)})| + |\hat{C}_{\sigma(j)}(X^{(l)}_j) - C_{\sigma(j)}(K^*_{\sigma(j)})| \nonumber\\
  &\leq \Delta_{l}/4 + \Delta_l/4 = \Delta_l/2, \nonumber 
\end{align}
implying that $j \in \mathcal{N}_i^{(l)}$ based on the neighborhood set update rule in \eqref{eqn:neighborhood_update}. Hence, by induction, $j \in \mathcal{N}_i^{(l)}$ in every epoch, therefore establishing the first claim of Theorem~\ref{thm:clustering}. 

Next, we show that on the ``good'' event where \eqref{eqn:local_cost_est_proof} holds for all agents in every epoch, the second claim of Theorem~\ref{thm:clustering} is also true. We prove this claim via contradiction. To proceed, suppose that there exist an epoch $l \geq L$, an agent $i$, and an agent $j \notin \mathcal{M}_{\sigma(i)}$ such that $j \in \mathcal{N}_i^{(l)}$. Then, we have the following in light of the heterogeneity metric defined in \eqref{eqn:het}:
\begin{align*}
    \Delta &\leq  \abslr{C_{\sigma(i)}(K^*_{\sigma(i)}) - C_{\sigma(j)}(K^*_{\sigma(j)})} \\
    &\leq \abslr{C_{\sigma(i)}(K^*_{\sigma(i)}) - \hat{C}_{\sigma(i)}(X_i^{(l)}) + \hat{C}_{\sigma(j)}(X_j^{(l)}) - C_{\sigma(j)}(K^*_{\sigma(j)}) + \hat{C}_{\sigma(i)}(X_i^{(l)}) - \hat{C}_{\sigma(j)}(X_j^{(l)})} \\
    & \leq \abslr{C_{\sigma(i)}(K^*_{\sigma(i)}) - \hat{C}_{\sigma(i)}(X_i^{(l)})} + \abslr{\hat{C}_{\sigma(j)}(X_j^{(l)}) - C_{\sigma(j)}(K^*_{\sigma(j)})} + \abslr{\hat{C}_{\sigma(i)}(X_i^{(l)}) - \hat{C}_{\sigma(j)}(X_j^{(l)})} \\
    & \overset{(a)}{\leq} \Delta_l/4 + \Delta_l/4 + \abslr{\hat{C}_{\sigma(i)}(X_i^{(l)}) - \hat{C}_{\sigma(j)}(X_j^{(l)})} \\
    & \overset{(b)}{\leq} \Delta/4 + \abslr{\hat{C}_{\sigma(i)}(X_i^{(l)}) - \hat{C}_{\sigma(j)}(X_j^{(l)})},
\end{align*}
where $(a)$ holds due to \eqref{eqn:local_cost_est_proof}, and $(b)$ follows as $\Delta_l \leq \Delta/2$ since $l \geq L$. The above set of inequalities imply that $\abslr{\hat{C}_{\sigma(i)}(X_i^{(l)}) - \hat{C}_{\sigma(j)}(X_j^{(l)})} \geq (3/4)\Delta \geq (3/2)\Delta_l$, contradicting our assumption that $j \in \mathcal{N}_i^{(l)}$ which requires that $\abslr{\hat{C}_{\sigma(i)}(X_i^{(l)}) - \hat{C}_{\sigma(j)}(X_j^{(l)})} \leq \Delta_l/2$. Therefore, $\mathcal{N}_i^{(l)} = \mathcal{M}_{\sigma(i)}$ for all $l \geq L$, establishing the second claim of Theorem~\ref{thm:clustering}.

Now, it remains to prove that the ``good'' event where \eqref{eqn:local_cost_est_proof} holds for all agents in every epoch occurs with probability at least $1 - \delta/2$.  To do so, for an agent $i \in [N]$ in epoch $l$, we have $$\abslr{\hat{C}_{\sigma(i)}(X^{(l)}_i) - C_{\sigma(i)}(K^*_{\sigma(i)})} \leq \abslr{C_{\sigma(i)}(X^{(l)}_i) - C_{\sigma(i)}(K^*_{\sigma(i)})} + \abslr{\hat{C}_{\sigma(i)}(X^{(l)}_i) - C_{\sigma(i)}(X^{(l)}_i)}.$$ Therefore, to show \eqref{eqn:local_cost_est_proof}, it suffices to show the following two guarantees for all agents in every epoch: 
\begin{equation}
\label{eqn:proof_thm1_req}
    \begin{aligned}
    (a) ~~~ \abslr{C_{\sigma(i)}(X^{(l)}_i) - C_{\sigma(i)}(K^*_{\sigma(i)})} &\leq \Delta_l/8 \\
    (b) ~~~ \abslr{\hat{C}_{\sigma(i)}(X^{(l)}_i) - C_{\sigma(i)}(X^{(l)}_i)} &\leq \Delta_l/8.
    \end{aligned}
\end{equation}
Now, let us establish the claims in \eqref{eqn:proof_thm1_req} via induction across epochs. Let us assume that for a fixed epoch $l-1 \geq 1$, the claims in \eqref{eqn:proof_thm1_req} hold for all agents $i \in [N]$ in every epoch $ k \leq \{1, 2, \ldots, l - 1\}$, with probability at least $(1 - \sum_{j = 1}^{l-1}\frac{\delta_j}{2})$. Denoting this event as $\mathcal{E}_{l-1}$, in the following, we show that the claims in \eqref{eqn:proof_thm1_req} hold in epoch $l$, $\forall i \in [N]$ with probability at least $(1 - \frac{\delta_{l}}{2})$ conditioned on the event $\mathcal{E}_{l-1}$. 

In the following, fixing an agent $i \in [N]$, we show: (a) $\abslr{C_{\sigma(i)}(X^{(l)}_i) - C_{\sigma(i)}(K^*_{\sigma(i)})} \leq \Delta_l/8$ with probability at least $1 - \delta_l/(4N)$ conditioned on the event $\mathcal{E}_{l-1}$, and (b) $\abslr{\hat{C}_{\sigma(i)}(X^{(l)}_i) - C_{\sigma(i)}(X^{(l)}_i)} \leq \Delta_l/8$ with probability at least $1 - \delta_l/(4N)$ conditioned on the intersection of the events $\mathcal{E}_{l-1}$ and the one where item (a) in~\eqref{eqn:proof_thm1_req} holds. The following lemma provides the convergence of the local policy optimization sub-routine in epoch $l$ which aids in establishing claim (a) from \eqref{eqn:proof_thm1_req}.

\begin{lemma} \label{lem:localPO} (\textbf{Local Policy Optimization}.) For any agent $i \in \mathcal{M}_j$, given a policy $K_0 \in \mathcal{G}_j^0$, let $K_R$ be the output of the $\texttt{localPO}(K_0, M, R, r)$ subroutine with step size $\eta$. Then, for any $\delta' \in (0, 1/R)$, with probability at least $1 - \delta'R$, $K_R \in \mathcal{G}_j^0$ and we have the following:
\begin{equation}
    \label{eqn:local_po_lemma}
    C_j(K_R) - C_j(K^*_j) \leq \left( 1 - \frac{\eta \mu}{4} \right)^R (C_j(K_0) - C_j(K^*_j)) + \left(\frac{c_{(p, 9)}D}{\sqrt{M}} \sqrt{\log\left(\frac{2D}{\delta'}\right)}\right),
\end{equation}
when $\eta = c_{(p, 1)}, M \geq \frac{c_{(p, 4)}}{\Delta_0^2} \log(2D/\delta'), r = \min\{\rho, \tilde{r}\}$, where $\tilde{r} = \left(\frac{c_{(p, 5)}}{\sqrt{M}} \sqrt{\log\left(\frac{2D}{\delta'}\right)}\right)^{1/2}$.
\end{lemma}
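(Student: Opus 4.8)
The plan is to run the single-agent zeroth-order policy-gradient analysis of \cite{malik2020derivative}, but to carefully propagate a high-probability event across the $R$ iterations of \texttt{localPO} and to track, by induction, that every iterate stays in the restricted stabilizing set $\mathcal{G}_j^{0}$ so that the local regularity properties apply. Write $X_0 = K_0, X_1, \ldots, X_R = K_R$ for the iterates, with $X_{t+1} = X_t - \eta\, g_i(X_t)$ and $g_i(X_t) = \texttt{ZO}_i(X_t, M, r)$. For $t \in \{0, \ldots, R-1\}$ let $\mathcal{A}_t$ be the event that the concentration bound of Lemma~\ref{lem:grad_concentration} holds at $X_t$, i.e. $\norm{g_i(X_t) - \nabla C_{j, r}(X_t)} \leq \frac{c_{(p,8)} D}{r\sqrt{M}}\sqrt{\log(2D/\delta')}$. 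Since each iteration draws fresh, independent rollouts, conditionally on $X_t \in \mathcal{G}_j^{0}$ each $\mathcal{A}_t$ has probability at least $1-\delta'$, so a union bound gives that $\bigcap_{t} \mathcal{A}_t$ holds with probability at least $1 - \delta' R$. I would then work on this event and prove by induction on $t$ the joint invariant that $X_t \in \mathcal{G}_j^{0}$ and that $C_j(X_t) - C_j(K_j^*)$ is non-increasing, the base case being the hypothesis $K_0 \in \mathcal{G}_j^{0}$.

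For the inductive step, on $\mathcal{A}_t$ I decompose the gradient error using the smoothing-bias bound \eqref{eqn:bounded_bias}, $\norm{\nabla C_{j, r}(X_t) - \nabla C_j(X_t)} \leq \phi r$, to obtain $\norm{g_i(X_t) - \nabla C_j(X_t)} \leq \epsilon := \phi r + \frac{c_{(p,8)} D}{r\sqrt{M}}\sqrt{\log(2D/\delta')}$; the choice $\tilde{r} \propto (1/\sqrt{M})^{1/2}$ with $c_{(p,5)} = c_{(p,8)} D/\phi$ is exactly what balances the two summands, and $r = \min\{\rho, \tilde{r}\}$ keeps $r \in (0,\rho)$ as required by Lemma~\ref{lem:grad_concentration} and \eqref{eqn:bounded_bias}. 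Because $\norm{\nabla C_j(X_t)} \leq \lambda$ on $\mathcal{G}_j^{0}$ (a consequence of Lemma~\ref{lem:local_lipschitz}), the step obeys $\norm{X_{t+1} - X_t} = \eta\norm{g_i(X_t)} \leq \eta(\lambda + \epsilon) \leq \rho$ by the choice $\eta = c_{(p,1)} \leq \rho/(\lambda + 2\max\{\sqrt{\phi}, 1/\rho\})$ together with the lower bound $M \geq \frac{c_{(p,4)}}{\Delta_0^2}\log(2D/\delta')$, which forces $\epsilon$ small; this is precisely the condition that licenses the descent inequality derived from Lemma~\ref{lem:local_smooth} between $X_t$ and $X_{t+1}$.

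Next I apply that descent inequality, write $g_i(X_t) = \nabla C_j(X_t) + e_t$ with $\norm{e_t} \leq \epsilon$, bound the cross term by Young's inequality, and use $\eta = c_{(p,1)} \leq 1/(4\phi)$ to absorb the term quadratic in $\eta$, yielding $C_j(X_{t+1}) - C_j(K_j^*) \leq (1 - \tfrac{\eta\mu}{4})(C_j(X_t) - C_j(K_j^*)) + c'\eta\epsilon^2$ after invoking the $\mu$-PL inequality of Lemma~\ref{lem:pl} (valid on the whole stabilizing set). The lower bound on $M$ guarantees $c'\epsilon^2 \leq \tfrac{\mu}{4}\cdot 10\tilde{\Delta}_0$, which closes the induction by keeping $X_{t+1} \in \mathcal{G}_j^{0}$; unrolling from $t = 0$ to $R-1$ then gives $C_j(K_R) - C_j(K_j^*) \leq (1 - \tfrac{\eta\mu}{4})^R (C_j(K_0) - C_j(K_j^*)) + \tfrac{4c'\epsilon^2}{\mu}$, and substituting $\epsilon^2 = \Theta\!\left(\phi\, c_{(p,8)} D\sqrt{\log(2D/\delta')}/\sqrt{M}\right)$ — with the edge case $r = \rho$ (when $\tilde{r} > \rho$) contributing the alternative $1/\rho^2$ scaling inside $(\max\{\sqrt{\phi}, 1/\rho\})^2$ — reproduces the stated additive term with the constant $c_{(p,9)} = \tfrac{12 c_{(p,8)}}{\mu}(\max\{\sqrt{\phi}, 1/\rho\})^2$.

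The part I expect to be delicate is the bookkeeping that keeps every iterate, as well as the smoothing-perturbed policies $X_t + rU_k$ appearing inside the minibatch, within $\mathcal{G}_j^{0}$: this is where the three parameter choices — the step-size cap $c_{(p,1)}$, the minibatch lower bound $c_{(p,4)}/\Delta_0^2$, and the smoothing radius $r = \min\{\rho, \tilde{r}\}$ — must be tuned simultaneously so that a single step neither overshoots the local neighborhood of radius $\rho$ (needed for Lemmas~\ref{lem:local_lipschitz}--\ref{lem:local_smooth}) nor raises the suboptimality past $10\tilde{\Delta}_0$. By contrast, the probabilistic component is clean: since each iteration uses fresh independent rollouts, the only randomness to control is the $R$ gradient-concentration events, handled by a plain union bound with no martingale argument required.
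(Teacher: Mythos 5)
Your proposal is correct and follows essentially the same route as the paper's proof: the same interleaved induction that conditions on the per-iteration gradient-concentration event, the same decomposition of the gradient error into a concentration term and a smoothing bias balanced by the choice of $\tilde{r}$, the same step-size conditions enabling the local smoothness and PL inequalities, and the same argument that the additive noise term stays below $\frac{\eta\mu}{4}\cdot 10\tilde{\Delta}_0$ so that every iterate remains in $\mathcal{G}_j^{0}$. One small slip: the invariant is not that $C_j(X_t)-C_j(K_j^*)$ is non-increasing (the contraction-plus-noise recursion does not give monotonicity), but only that it stays bounded by $10\tilde{\Delta}_0$ — which is exactly what your closing of the induction actually establishes, so the argument is unaffected.
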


The proof of Lemma~\ref{lem:localPO} is provided in Appendix~\ref{app:proof_localPO}. We use Lemma~\ref{lem:localPO} to analyze the local policy optimization step in line 4 of the \texttt{PCPO} algorithm that helps in establishing $\abslr{C_{\sigma(i)}(X^{(l)}_i) - C_{\sigma(i)}(K^*_{\sigma(i)})} \leq \Delta_l/8$ with probability at least $1 - \delta_l/(4N)$. More precisely, the settings for the hyperparameters $(\eta, M_l, r_l^{(\text{loc})}, R_l)$ from Table~\ref{tab:hyp_vals} meet the requirement for the corresponding hyperparameters in Lemma~\ref{lem:localPO}. Furthermore, conditioned on the event $\mathcal{E}_{l-1}$, claim $(a)$ in \eqref{eqn:proof_thm1_req} implies that $X_i^{(l-1)} \in \mathcal{G}_{\sigma(i)}^0$. Hence, the following holds due to Lemma~\ref{lem:localPO} with probability at least $1 - \delta'R_l$ for some $\delta' \in (0, 1/R_l)$:
$$C_{\sigma(i)}(X_i^{(l)}) - C_{\sigma(i)}(K^*_{\sigma(i)}) \leq \underbrace{\left( 1 - \frac{\eta \mu}{4} \right)^{R_l} (C_{\sigma(i)}(X_i^{(l-1)}) - C_{\sigma(i)}(K^*_{\sigma(i)}))}_{s_1} + \underbrace{\left(\frac{c_{(p, 9)}D}{\sqrt{M_l}} \sqrt{\log\left(\frac{2D}{\delta'}\right)}\right)}_{s_2}.$$
Note that $(C_{\sigma(i)}(X_i^{(l-1)}) - C_{\sigma(i)}(K^*_{\sigma(i)})) \leq \Delta_{l-1}/8 \leq \Delta_0 \leq \tilde{\Delta}_0$ as a result of conditioning on the event $\mathcal{E}_{l-1}$ where claim $(a)$ of \eqref{eqn:proof_thm1_req} holds. Hence, from Table~\ref{tab:hyp_vals}, setting $R_l =  c_{(p, 2)} \log\left(\frac{c_{(p, 3)}N}{\Delta_l^2}\right) \geq \frac{4}{\eta \mu}\log \left(\frac{16 \tilde{\Delta}_0}{\Delta_l}\right)$ ensures $s_1 \leq \Delta_l/16$. Similarly, setting $M_l = \frac{c_{(p, 4)}}{\Delta_l^2}\log\left(\frac{2D}{\delta'}\right) \geq \frac{16^2c_{(p, 9)}^2D^2}{\Delta_l^2}\log\left(\frac{2D}{\delta'}\right)$ ensures $s_2 \leq \Delta_l/16$. Finally, setting $\delta' = \delta_l/(4NR_l)$ provides us with $C_{\sigma(i)}(X^{(l)}_i) - C_{\sigma(i)}(K^*_{\sigma(i)}) \leq \Delta_l/8$, and hence $X^{(l)}_i \in \mathcal{G}^{0}_{\sigma(i)}$ with probability at least $(1 - \delta_l/(4N))$, based on Lemma~\ref{lem:localPO}. Let us denote this event by $\tilde{\mathcal{E}}_{(l, 1)}$.

Now, we show that $\abslr{\hat{C}_{\sigma(i)}(X^{(l)}_i) - C_{\sigma(i)}(X^{(l)}_i)} \leq \Delta_l/8$ with probability at least $1 - \delta_l/(4N)$ conditioned on the event $\tilde{\mathcal{E}}_{(l, 1)}$. We have $\hat{C}_{\sigma(i)}(X^{(l)}_i) = \frac{1}{M_l} \sum_{j=1}^{M_l} C_{\sigma(i)}(X_i^{(l)}, \mathcal{Z}_j^{(i)})$ and $\mathbb{E}[C_{\sigma(i)}(X_i^{(l)}, \mathcal{Z}_j^{(i)})] = C_{\sigma(i)}(X^{(l)}_i)$. Furthermore, since on event $\tilde{\mathcal{E}}_{(l, 1)}$, $X_i^{(l)} \in \mathcal{G}_j^0$, we have $C_{\sigma(i)}(X_i^{(l)}, \mathcal{Z}_j^{(i)}) \leq G_\infty$ due to \eqref{eqn:uniformly_bounded_noisy_cost}. Using Hoeffding's inequality, the following holds for all $s \geq 0$:
\begin{align*}
    \mathbb{P}\left(\abslr{\frac{1}{M_l} \sum_{j=1}^{M_l} C_{\sigma(i)}(X_i^{(l)}, \mathcal{Z}_j^{(i)}) - C_{\sigma(i)}(X^{(l)}_i)} \geq s \right) \leq 2\exp\left( \frac{-2s^2M_l}{G_\infty^2}\right).
\end{align*}
Setting $s = \Delta_l/8$, and requiring the failure probability on the R.H.S to be lesser than $\delta_l/(4N)$ leads to the requirement: $M_l \geq \frac{36 G_\infty^2}{\Delta_l^2}\log\left(\frac{8N}{\delta_l}\right)$ which is satisfied by our choice of $M_l$ from Table~\ref{tab:hyp_vals}.  Hence, $\abslr{\hat{C}_{\sigma(i)}(X^{(l)}_i) - C_{\sigma(i)}(X^{(l)}_i)} \leq \Delta_l/8$ with probability at least $1 - \delta_l/(4N)$. Let us denote this event as $\tilde{\mathcal{E}}_{(l, 2)}$. 

Now, to show the two claims in \eqref{eqn:proof_thm1_req}, let us define an event $\tilde{\mathcal{E}}_l = \tilde{\mathcal{E}}_{(l, 1)} \cap \tilde{\mathcal{E}}_{(l, 2)}$. Then, $\mathbb{P}(\tilde{\mathcal{E}}_l \vert \mathcal{E}_{l-1}) = \mathbb{P}(\tilde{\mathcal{E}}_{(l, 2)} \vert \tilde{\mathcal{E}}_{(l, 1)}, \mathcal{E}_{l-1}) \mathbb{P}(\tilde{\mathcal{E}}_{(l, 1)} \vert \mathcal{E}_{l-1}) \geq (1 - \delta_l/(4N))(1 - \delta_l/(4N)) \geq 1 - \delta_l/(2N)$. Therefore, on event $\tilde{\mathcal{E}}_l$, both the guarantees: (a) $\abslr{C_{\sigma(i)}(X^{(l)}_i) - C_{\sigma(i)}(K^*_{\sigma(i)})} \leq \Delta_l/8$ and (b) $\abslr{\hat{C}_{\sigma(i)}(X^{(l)}_i) - C_{\sigma(i)}(X^{(l)}_i)} \leq \Delta_l/8$ hold with probability at least $1 - \delta_l/(2N)$. Union bounding over all the agents, with probability at least $1 - \delta_l/2$, both claims in \eqref{eqn:proof_thm1_req} hold for all agents $i \in [N]$ on the event $\tilde{\mathcal{E}}_l$ after conditioning on the event $\mathcal{E}_{l-1}$.

Finally, defining an event $\mathcal{E}_l = \tilde{\mathcal{E}}_l \cap \mathcal{E}_{l-1}$, we have, 
$$\mathbb{P}(\mathcal{E}_l) = \mathbb{P}(\tilde{\mathcal{E}_l} \vert \mathcal{E}_{l-1}) \mathbb{P}(\mathcal{E}_{l-1}) \geq (1 - \delta_l/2) \left(1 - \sum_{j = 1}^{l-1}\frac{\delta_j}{2} \right) \geq \left(1 - \sum_{j = 1}^{l}\frac{\delta_j}{2} \right).$$ Since $\delta_l = \delta/(2l^2)$, we have $\sum_{j = 1}^{l}\frac{\delta_j}{2} = \sum_{j = 1}^{l}\frac{\delta}{4j^2}\leq \delta/2$. This completes the proof of Theorem~\ref{thm:clustering}. 

\subsection{Proof of Lemma~\ref{lem:localPO}}
\label{app:proof_localPO}
In this section, we prove Lemma~\ref{lem:localPO} which provides the convergence of the \texttt{localPO} subroutine. Fix a system $j \in [H]$ and let $K_t$ denote the controller in the $t$-th iteration of \texttt{localPO} $\forall t = 0, 1, \ldots, R$. Note that the \texttt{localPO} sub-routine proceeds as follows: starting with a controller $K_0 \in \mathcal{G}^0_j$, in every iteration $t$, $K_t$ is updated as $K_{t+1} = K_t - \eta g(K_t)$, where $g(K_t) = \texttt{ZO}(K_t, M, r)$ is the $M$-minibatched zeroth-order gradient estimate with a smoothing radius $r$. We prove the statement via induction. Given the base case $K_0 \in \mathcal{G}_j^0$ and $\delta' \in (0, 1/R)$, let us assume that in the $t$-th iteration the following holds for all $\tau \in \{1, 2, \ldots, t\}$ with probability at least $(1 - \delta't):$ 

\begin{equation}
    \label{eqn:local_po_lemma_induction}
    \begin{aligned}
    K_\tau &\in \mathcal{G}_j^0 \\
    C_j(K_\tau) - C_j(K^*_j) &\leq \left( 1 - \frac{\eta \mu}{4} \right) (C_j(K_{\tau-1}) - C_j(K^*_j)) + \frac{\eta \mu}{4}\left(\frac{c_{(p, 9)}D}{\sqrt{M}} \sqrt{\log\left(\frac{2D}{\delta'}\right)}\right).  
    \end{aligned}
\end{equation}
Let us denote the event where both the claims in \eqref{eqn:local_po_lemma_induction} hold by $E_t$. Now, conditioned on the event $E_t$, in the following, we will show that with probability at least $1 - \delta'$, $K_{t+1} \in \mathcal{G}_j^0$ and $$ C_j(K_{t+1}) - C_j(K^*_j) \leq \left( 1 - \frac{\eta \mu}{4} \right) (C_j(K_{t}) - C_j(K^*_j)) + \frac{\eta \mu}{4}\left(\frac{c_{(p, 9)}D}{\sqrt{M}} \sqrt{\log\left(\frac{2D}{\delta'}\right)}\right).$$

In what follows, we omit the subscript notation $j$ for convenience. Conditioned on the event $E_t$, we begin by analyzing the one-step progress in the $(t+1)$-th iteration of \texttt{localPO}. 

From Lemma~\ref{lem:grad_concentration}, as the event $E_t$ ensures that $K_t \in \mathcal{G}^0$, we have $\norm{g(K_t) - \nabla C_{r}(K_t)} \leq \frac{c_{(p, 8)}D}{r\sqrt{M}} \sqrt{\log\left(\frac{2D}{\delta'}\right)}$ with probability at least $(1 - \delta')$. Let us denote this event by $\tilde{E}_t$. Define $e_t \coloneqq g(K_t) - \nabla C(K_t)$. Conditioned on the event $\tilde{E}_t \cap E_t$, we have
\begin{align}
    \norm{e_t} &= \norm{ g(K_t) - \nabla C_{r}(K_t) + \nabla C_{r}(K_t) - \nabla C(K_t)} \nonumber\\
    &\overset{(a)}{\leq} \norm{ g(K_t) - \nabla C_{r}(K_t)} + \norm{\nabla C_{r}(K_t) - \nabla C(K_t)} \nonumber\\
    &\overset{(b)}{\leq} \frac{c_{(p, 8)}D}{r\sqrt{M}} \sqrt{\log\left(\frac{2D}{\delta'}\right)} + \phi r, \label{eqn:et_bound}
\end{align}
where $(a)$ follows from the triangle inequality and the $(b)$ due to the event $\tilde{E}_t \cap E_t$ and \eqref{eqn:bounded_bias}. Let us define $c_p = \frac{c_{(p, 8)}D}{r\sqrt{M}} \sqrt{\log\left(\frac{2D}{\delta'}\right)} + \phi r$. Set $\tilde{r} = \left(\frac{c_{(p, 8)}D}{\phi\sqrt{M}} \sqrt{\log\left(\frac{2D}{\delta'}\right)}\right)^{1/2}$,   $r = \min\{\rho, \tilde{r}\}$, and define $Z \coloneqq \frac{c_{(p, 8)}D}{\sqrt{M}} \sqrt{\log\left(\frac{2D}{\delta'}\right)}$. Based on the choice $M \geq \frac{c_{(p, 4)}}{\Delta_0^2} \log(2D/\delta')$, we have $Z \leq 1$, yielding the following sequence of bounds on $c_p:$
\begin{align}
    c_p &= \frac{Z}{r} + \phi r \nonumber\\
    &\leq \max \left\{\frac{Z}{\tilde{r}} + \phi \tilde{r}, \frac{Z}{\rho} + \phi \rho \right\} \nonumber\\
    &\overset{(a)}{\leq} \max \left\{2 \sqrt{Z \phi}, \frac{\sqrt{Z}}{\rho} + \phi \tilde{r} \right\} \nonumber\\
    & = \max \left\{2 \sqrt{Z \phi}, \frac{\sqrt{Z}}{\rho} + \sqrt{Z \phi} \right\} \nonumber\\
    & \leq 2 \sqrt{Z} \max \left\{\sqrt{\phi}, \frac{1}{\rho}\right\} \label{eqn:cp_bound1}\\
    & \overset{(b)}{\leq} 2 \max \left\{\sqrt{\phi}, \frac{1}{\rho}\right\} \label{eqn:cp_bound2},
\end{align}
where $(a)$ and $(b)$ follow from $Z \leq 1$. Based on the above, we have
\begin{align*}
    \eta\norm{g(K_t)} &\leq \eta (\norm{\nabla C(K_t)}) + \norm{e_t} \\
    & \overset{(a)}{\leq} \eta(\lambda + c_p )\\
    & \overset{(b)}{\leq} \eta \left(\lambda + 2 \max\left\{\sqrt{\phi}, \frac{1}{\rho}\right\}\right),
\end{align*}
where $(a)$ follows from Lemma~\ref{lem:local_lipschitz} and $(b)$ follows from \eqref{eqn:cp_bound2}. Setting the RHS $\leq \rho$ leads to the requirement $\eta \leq \frac{\rho}{\lambda + 2 \max\left\{\sqrt{\phi}, \frac{1}{\rho}\right\}}$ which is satisfied by setting $\eta = c_{(p, 1)}$. This ensures that $\norm{K_{t+1} - K_t} \leq \rho$. Using the local smoothness property (Lemma~\ref{lem:local_smooth}), we then have
\begin{align*}
    C(K_{t+1}) - C(K_{t}) &\leq \langle \nabla C(K_{t}), K_{t+1} - K_{t} \rangle + \frac{\phi}{2}  \norm{K_{t+1} - K_{t}}^2 \\
    &= -\eta \langle \nabla C(K_{t}), g(K_t) \rangle + \frac{\phi \eta^2}{2} \norm{g(K_t)}^2 \\
    &= -\eta \langle \nabla C(K_{t}), \nabla C(K_{t}) + e_t \rangle + \frac{\phi \eta^2}{2} \norm{\nabla C(K_{t}) + e_t}^2\\
    & \overset{(a)}{\leq} -\eta \norm{\nabla C(K_{t})}^2 -\eta  \langle \nabla C(K_{t}), e_t \rangle + \phi \eta^2 \norm{\nabla C(K_{t})}^2 + \phi \eta^2 \norm{e_t}^2 \\
    & \overset{(b)}{\leq} -\eta(1 - \phi\eta) \norm{\nabla C(K_{t})}^2 + \frac{\eta}{2}\norm{\nabla C(K_{t})}^2 + \frac{\eta}{2}\norm{e_t}^2 + \phi \eta^2 \norm{e_t}^2\\
    & = -\frac{\eta}{2}(1 - 2\phi \eta)\norm{\nabla C(K_{t})}^2 + \frac{\eta}{2}(1 + 2\phi\eta)\norm{e_t}^2 \\
    & \overset{(c)}{\leq} -\frac{\eta}{4}\norm{\nabla C(K_{t})}^2 + \frac{3\eta}{4}c_p^2.
\end{align*}
In the above, we used $ \norm{A + B}^2 \leq 2\norm{A}^2 + 2\norm{B}^2$ in $(a)$, and $-2\langle A, B\rangle \leq \norm{A}^2 + \norm{B}^2$ in $(b)$ where $A$ and $B$ are any matrices in $\mathbb{R}^{m \times n}$. In $(c)$, we used $\eta \leq 1/(4\phi)$ (satisfied by our choice $\eta = c_{(p, 1)}$) and $\norm{e_t} \leq c_p$. Denoting the suboptimality gap as $S_t = C(K_t) - C(K^*)$, and using the PL condition \eqref{eqn:PL_condition} in the above, we obtain the following with probability at least $1 - \delta'$:
\begin{align}
     S_{t+1} &\leq \left( 1 - \frac{\eta \mu}{4} \right) S_t + \frac{3\eta}{4}c_p^2 \nonumber\\
     & \leq \left( 1 - \frac{\eta \mu}{4} \right) S_t + \frac{\eta \mu}{4}\left(\frac{3}{\mu}c_p^2\right). \label{eqn:localPo_lemma_one_step}
\end{align}
On event $E_t$, since we have $K_t \in \mathcal{G}_j^0$, $S_t \leq 10\tilde{\Delta}_0$. Furthermore, due to \eqref{eqn:cp_bound1}, we have $$\frac{3}{\mu}c_p^2 \leq \frac{12}{\mu} \left(\max \left\{\sqrt{\phi}, \frac{1}{\rho}\right\}\right)^2 \frac{c_{(p, 8)}D}{\sqrt{M}} \sqrt{\log\left(\frac{2D}{\delta'}\right)}.$$ Defining $c_{(p, 9)} \coloneqq \frac{12c_{(p, 8)}}{\mu}\left(\max \left\{\sqrt{\phi}, \frac{1}{\rho}\right\}\right)^{2}$ and setting $M \geq \frac{c_{(p, 9)}^2D^2}{\Delta_0^2}\log\left(\frac{2D}{\delta'}\right)$,  which is satisfied by $M \geq \frac{c_{(p, 4)}}{\Delta_0^2} \log(2D/\delta')$ from the statement of Lemma~\ref{lem:localPO}, ensures that $\frac{3}{\mu}c_p^2 \leq \Delta_0 \leq 10 \tilde{\Delta}_0$. Hence, based on \eqref{eqn:localPo_lemma_one_step}, we have $S_{t+1} \leq \left( 1 - \frac{\eta \mu}{4} \right) 10 \tilde{\Delta}_0 + \frac{\eta \mu}{4}10 \tilde{\Delta}_0 \leq 10 \tilde{\Delta}_0.$ Therefore, conditioned on the event $\tilde{E}_t \cap E_t$, $K_{t+1} \in \mathcal{G}_j^0$ and $$ S_{t+1} \leq \left( 1 - \frac{\eta \mu}{4} \right) S_{t} + \frac{\eta \mu}{4}\left(\frac{c_{(p, 9)}D}{\sqrt{M}} \sqrt{\log\left(\frac{2D}{\delta'}\right)}\right).$$ 
Now, let us define $E_{t+1} \coloneqq \tilde{E}_t \cap E_t$. We have $\mathbb{P}(\tilde{E}_t \cap E_t) = \mathbb{P}(\tilde{E}_t \vert E_t) \mathbb{P}(E_t) \geq (1 - \delta')(1 - \delta't) \geq 1 - \delta'(t+1)$. This completes the induction step. To prove the statement of Lemma~\ref{lem:localPO}, since $\eta = c_{(p, 1)} \leq 8/\mu$, for any $R \geq 1$, we can unroll the recursion on the event $E_R$ which occurs with probability $1 - \delta'R$. Doing so, we obtain the following which completes the proof:
\begin{align}
    S_{R} &\leq \left( 1 - \frac{\eta \mu}{4} \right)^{R} S_0 +\sum_{k = 0}^{R-1}\left(1 - \frac{\eta \mu}{4}\right)^k \frac{\eta \mu}{4}\left(\frac{c_{(p, 9)}D}{\sqrt{M}} \sqrt{\log\left(\frac{2D}{\delta'}\right)}\right) \nonumber \\
    &\leq \left(1 - \frac{\eta \mu}{4}\right)^{R} S_0 + \left(\frac{c_{(p, 9)}D}{\sqrt{M}} \sqrt{\log\left(\frac{2D}{\delta'}\right)}\right). \label{eqn:localPo_lemma_unroll}
\end{align}

\newpage
\section{Proof of Theorem \ref{thm:collaborative_optimization}}
\label{app:proof_thm2}
We prove Theorem \ref{thm:collaborative_optimization} by conditioning on the event where the claims in \eqref{eqn:proof_thm1_req} hold for all agents in every epoch. In Appendix~\ref{app:proof_thm1}, we showed that such an event occurs with probability at least $1 - \delta/2$ and let us denote it by $\mathcal{E}_{\text{Thm1}}$. Furthermore, under this event, the claims of Theorem~\ref{thm:clustering} hold as shown in Appendix~\ref{app:proof_thm1}. In particular, the true clusters are always contained in the neighborhood sets and correct clustering takes place at the latest during the $L$-th epoch, ensuring that the agents collaborate solely within their own clusters from the $(L+1)$-th epoch onward. With that in mind, we consider the following approach to prove Theorem~\ref{thm:collaborative_optimization}. First, we take for granted that the last epoch occurs after the correct clustering takes place, i.e, $\bar{L} > L$, and later show that this is indeed true if the total number of rollouts $T \geq \tilde{O}(1/\Delta^2)$. Next, we show that for any $l > L$ (note that at least one such epoch exists in light of $\bar{L} > L$,) the policy at the start of the collaborative policy optimization remains in the corresponding restricted domain with high probability, i.e, $\hat{K}_i^{(l-1)} \in \mathcal{G}^0_{\sigma(i)}$. Then, we focus on the last epoch $\bar{L}$ and provide the convergence guarantee, and finally conclude by analyzing the number of rollouts needed to ensure $\bar{L} > L$.

Conditioned on the event $\mathcal{E}_{\text{Thm1}}$, we follow an induction based argument to show that $\hat{K}_i^{(l-1)} \in \mathcal{G}_{\sigma(i)}^0$ for all agents $i \in [N]$ in every epoch $l > L$. Let us define $\tilde{L}$ as the first epoch where correct clustering takes place. Due to Theorem~\ref{thm:clustering}, since the correct clustering takes place at the latest during the $L$th epoch, $\tilde{L} \leq L$, and moreover, since the neighborhood sets are sequentially pruned with no new agents getting added to the neighborhood sets, we have $\mathcal{M}_{\sigma(i)} = \mathcal{N}_i^{(l)}$ for all agents in every epoch $l \geq \tilde{L}$. Furthermore, $\tilde{L}$ being the first epoch where correct clustering takes place, $\mathcal{N}_i^{\tilde{L}-1} \neq \mathcal{N}_i^{\tilde{L}} = \mathcal{M}_{\sigma(i)}$ for some agent $i \in [N]$, hence causing reinitializtion as shown in \eqref{eqn:reinitialization}. After this reinitialization, the global sequences for all the agents are updated by collaborating within their respective clusters, and hence the global sequences for two agents within a cluster evolve identically in light of \eqref{eqn:averaged_grad}. In other words, for all agents $i, j$, if $\mathcal{M}_{\sigma(i)} = \mathcal{M}_{\sigma(j)}$, then for all $l \geq \tilde{L}$, we have the following on the event $\mathcal{E}_{\text{Thm1}}$: 
\begin{equation}
\label{eqn:correct_cluster_conseq}
    \begin{aligned}
        \hat{K}_i^{(l)} &= \hat{K}_j^{(l)} \\
       \mathcal{M}_{\sigma(i)} = \mathcal{N}_i^{(l)} &= \mathcal{N}_j^{(l)} = \mathcal{M}_{\sigma(j)} 
    \end{aligned}
\end{equation}
Taking this into account, we show that $\hat{K}_i^{(l-1)} \in \mathcal{G}_{\sigma(i)}^0$ for all agents $i \in [N]$ in every epoch $l > \tilde{L}$ via induction across epochs. For the base case $l = \tilde{L} + 1$, as a consequence of reinitialization during the $\tilde{L}$th epoch, and since $X_i^{(\tilde{L})} \in \mathcal{G}_{\sigma(i)}^0$ for all agents $i \in [N]$ as a result of conditioning on the event $\mathcal{E}_{\text{Thm1}}$, we have $\hat{K}_i^{(\tilde{L})} \in \mathcal{G}_{\sigma(i)}^0$ for all agents $i \in [N]$. Let us assume that for an epoch $l \geq \tilde{L} + 1$, with probability at least $(1 - \sum_{j = 1}^{l-1}\frac{\delta_j}{4})$, we have $\hat{K}_i^{(t-1)} \in \mathcal{G}_{\sigma(i)}^0$ for all agents $i \in [N]$ and for all $t \in \{\tilde{L} + 1, \tilde{L}+2, \dots, l\}$. With a slight abuse of notation, let us denote this event by $E_{l-1}$. Next, conditioned on the event $\mathcal{E}_{\text{Thm1}} \cap E_{l-1}$, we show that $\hat{K}_i^{(l)} \in \mathcal{G}_{\sigma(i)}^0$ for all agents $i \in [N]$ with probability at least $1 - \delta_l/4$.

In what follows we fix an agent $i \in [N]$ and omit the notation $i$ and $\sigma(i)$ for convenience. Given that $K^{(l-1)} \in \mathcal{G}^0$ on the event $\mathcal{E}_{\text{Thm1}} \cap E_{l-1}$, we focus on analyzing the iterates $\{Y^{(k)}\}_{0\leq k < R_l}$ in the $l$th epoch. The iterates are updated as follows: $Y^{(k+1)} = Y^{(k)} - \eta G(Y^{(k)})$ with $Y^{(0)} = \hat{K}^{(l-1)}$, where we used $G(Y^{(k)})$ to denote the averaged zeroth-order gradient estimate as shown in \eqref{eqn:averaged_grad} in the $k$th iteration. Note that in the light of Assumption~\ref{ass:independence}, the averaged gradient estimate is an unbiased estimate of $\nabla C_r(Y^{(k)})$ with an effective minibatch size of $M_l \abs{\mathcal{M}}$. Therefore, we follow an approach similar to the one from the proof of Lemma~\ref{lem:localPO} in Appendix~\ref{app:proof_localPO} to analyze the one-step progress and to show that $Y^{(R_l)} = \hat{K}^l \in \mathcal{G}^0$. More specifically, we follow an induction based approach across iterations and establish one-step recursion similar to \eqref{eqn:local_po_lemma_induction} and finally unroll the recursion to obtain something similar to \eqref{eqn:localPo_lemma_unroll}. However, a key difference arises from the fact that the second term in the RHS of both \eqref{eqn:local_po_lemma_induction} and \eqref{eqn:localPo_lemma_unroll} will now enjoy a \emph{variance reduction} effect due to collaboration in light of Assumption~\ref{ass:independence} as shown in Corollary~\ref{cor:collaborative_grad_concentration}.

In particular, following the induction approach from the proof of Lemma~\ref{lem:localPO} in Appendix~\ref{app:proof_localPO}, in the $k$th iteration, we have the following concentration with probability at least $(1 - \delta')$ after conditioning on the event where the previous iterations satisfy similar guarantees as in \eqref{eqn:local_po_lemma_induction}:
$$\norm{G(Y^{(k)}) - \nabla C_{r}(Y^{(k)})} \leq \frac{c_{(p, 8)}D}{r\sqrt{M_{l}\abs{\mathcal{M}}}} \sqrt{\log\left(\frac{2D}{\delta'}\right)}.$$ Conditioned on the event where the gradient estimate is concentrated as above, and defining $e_k \coloneqq G(Y^{(k)}) - \nabla C(Y^{(k)})$, we have $\norm{e_{k}} \leq \frac{c_{(p, 8)}D}{r\sqrt{M_{l}\abs{\mathcal{M}}}} \sqrt{\log\left(\frac{2D}{\delta'}\right)} + \phi r$ following the arguments up to \eqref{eqn:et_bound}. Now, let us define $c_p = \frac{c_{(p, 8)}D}{r\sqrt{M_{l}\abs{\mathcal{M}}}} \sqrt{\log\left(\frac{2D}{\delta'}\right)} + \phi r$. Setting $\tilde{r} = \left(\frac{c_{(p, 8)}D}{\phi\sqrt{M_{l}}} \sqrt{\log\left(\frac{2D}{\delta'}\right)}\right)^{1/2}$ and $r = \min\{\rho, \frac{\tilde{r}}{\abs{\mathcal{M}}^{1/4}}\}$, we obtain the following bound on $c_p$ provided $Z \coloneqq \frac{c_{(p, 8)}D}{\sqrt{M_{l}}} \sqrt{\log\left(\frac{2D}{\delta'}\right)} \leq 1$ which is ensured by our setting for $M_l$ in Table~\ref{tab:hyp_vals}. 
\begin{align}
    c_p &= \frac{Z}{r\sqrt{\abs{\mathcal{M}}}} + \phi r \nonumber\\
    &\leq \max \left\{\frac{Z}{\tilde{r}\abs{\mathcal{M}}^{1/4}} + \phi \frac{\tilde{r}}{\abs{\mathcal{M}}^{1/4}}, \frac{Z}{\rho\abs{\mathcal{M}}^{1/2}} + \phi \rho \right\} \nonumber\\
    &\leq \max \left\{2 \frac{\sqrt{Z \phi}}{\abs{\mathcal{M}}^{1/4}}, \frac{\sqrt{Z}}{\rho\abs{\mathcal{M}}^{1/4}} + \phi \frac{\tilde{r}}{\abs{\mathcal{M}}^{1/4}} \right\} \nonumber\\
    & \leq 2 \frac{\sqrt{Z}}{\abs{\mathcal{M}}^{1/4}} \max \left\{\sqrt{\phi}, \frac{1}{\rho}\right\} \label{eqn:cp_bound3}\\
    & \leq 2 \max \left\{\sqrt{\phi}, \frac{1}{\rho}\right\} \label{eqn:cp_bound4}.
\end{align}
Based on the above, we choose $\eta = c_{(p, 1)} \leq \frac{\rho}{\lambda + 2 \max\left\{\sqrt{\phi}, \frac{1}{\rho}\right\}}$ to ensure that $\norm{Y^{{(k+1)}} - Y^{(k)}} \leq \rho$. Defining $S_k = C(Y^{(k)}) - C(K^*)$ and following the analysis from Appendix~\ref{app:proof_localPO} up to \eqref{eqn:localPo_lemma_one_step} and using the bound on $c_p$ from \eqref{eqn:cp_bound3}, we obtain

$$S_{k+1} \leq \underbrace{\left( 1 - \frac{\eta \mu}{4} \right) S_{k}}_{s_1} + \underbrace{\frac{\eta \mu}{4}\left(\frac{c_{(p, 9)}D}{\sqrt{M_l\abs{\mathcal{M}}}} \sqrt{\log\left(\frac{2D}{\delta'}\right)}\right)}_{s_2},$$ with probability at least $1 - \delta'$. Note that since $\abs{\mathcal{M}} \geq 1$, the term $s_2$ is not greater than the corresponding term from \eqref{eqn:local_po_lemma_induction}, and hence $s_2 \leq \frac{\eta \mu}{4}\Delta_0 \leq \frac{\eta \mu}{4}10 \tilde{\Delta}_0$. Meanwhile, the term $s_1 \leq \left( 1 - \frac{\eta \mu}{4} \right) 10 \tilde{\Delta}_0$ as we have conditioned on the event where $Y^{(k)} \in \mathcal{G}^0$ similar to the proof of Lemma~\ref{lem:localPO}. This ensures that $Y^{(k+1)} \in \mathcal{G}^0$. Now, unrolling the recursion, we have with probability at least $1 - \delta'R_l$, $Y^{(R_l)}=\hat{K}^{(l)} \in \mathcal{G}^0$ and the following: 
\begin{equation}
\label{eqn:thm2_unroll}
  C(\hat{K}^{(l)}) - C(K^*) \leq \left(1 - \frac{\eta \mu}{4}\right)^{R_{l}} (C(\hat{K}^{(l-1)}) - C(K^*)) + \left(\frac{c_{(p, 9)}D}{\sqrt{M_l\abs{\mathcal{M}}}} \sqrt{\log\left(\frac{2D}{\delta'}\right)}\right).  
\end{equation}
Setting $\delta' = \delta_l/(4R_lN)$ and applying an union bound over all agents, we have the above guarantee for all agents with probability at least $1 - \delta_l/4$. 

Let us denote this event by $\tilde{E}_l$. Defining $E_l = \tilde{E}_l \cap E_{l-1}$, we have the following: $$\mathbb{P}(E_l \vert \mathcal{E}_{\text{Thm1}}) = \mathbb{P}(\tilde{E_l} \vert E_{l-1}, \mathcal{E}_{\text{Thm1}}) \mathbb{P}(E_{l-1}\vert \mathcal{E}_{\text{Thm1}}) \geq (1 - \delta_l/4) \left(1 - \sum_{j = 1}^{l-1}\frac{\delta_j}{4} \right) \geq \left(1 - \sum_{j = 1}^{l}\frac{\delta_j}{4} \right).$$  This completes the induction argument. Hence, we have established that $\hat{K}_i^{(l-1)} \in \mathcal{G}_{\sigma(i)}^0$ and that \eqref{eqn:thm2_unroll} holds for all agents $i \in [N]$ in every epoch $l \geq \tilde{L}$ with probability at least $\left(1 - \sum_{j = 1}^{l}\frac{\delta_j}{4} \right)$. 

Next, we analyze the final convergence guarantee in the last epoch $\bar{L}$. Conditioned on the event $E_{\bar{L}-1} \cap \mathcal{E}_{\text{Thm1}}$, we obtain \eqref{eqn:thm2_unroll} as shown in the following with probability at least $1 - \delta_{\bar{L}}/4$ for all agents $i \in [N]$:
\begin{equation*}
\begin{aligned}
  C_{\sigma(i)} (\hat{K}_i^{(\bar{L})}) - C_{\sigma(i)}(K^*_{\sigma(i)}) &\leq \underbrace{\left(1 - \frac{\eta \mu}{4}\right)^{R_{\bar{L}}} (C_{\sigma(i)} (\hat{K}_i^{(\bar{L}-1)}) - C_{\sigma(i)}(K^*_{\sigma(i)}))}_{s_1} \\
  & \quad + \underbrace{\left(\frac{c_{(p, 9)}D}{\sqrt{M_{\bar{L}}\abs{\mathcal{M}}}} \sqrt{\log\left(\frac{8DNR_{\bar{L}}}{\delta_{\bar{L}}}\right)}\right)}_{s_2}.    
\end{aligned}
\end{equation*}
In the above, the settings of $R_{\bar{L}}$ and $M_{\bar{L}}$ from Table~\ref{tab:hyp_vals} ensures the following: from $R_{\bar{L}} \geq \frac{4}{\eta \mu}\log\left(\frac{c_{(p, 4)}N 10\tilde{\Delta}_0}{\Delta_{\bar{L}}^2}\right)$, note that the term $$s_1 \leq \exp\left(-\frac{\eta \mu R_{\bar{L}}}{4}\right)S_0 \leq \exp\left(-\frac{\eta \mu R_{\bar{L}}}{4}\right)10\tilde{\Delta}_0 \leq \frac{\Delta_{\bar{L}}^2}{c_{(p, 4)}N}.$$ Using $M_{\bar{L}} = \frac{c_{(p, 4)}}{\Delta_{\bar{L}}^2}\log\left(\frac{8DNR_{\bar{L}}}{\delta_{\bar{L}}}\right)$ in the above, we have $s_1 \leq \frac{\log\left(\frac{8DNR_{\bar{L}}}{\delta_{\bar{L}}}\right)}{M_{\bar{L}}N}$. Furthermore, since $M_{\bar{L}} = \frac{c_{(p, 4)}}{\Delta_{\bar{L}}^2}\log\left(\frac{8DNR_{\bar{L}}}{\delta_{\bar{L}}}\right)\geq \log\left(\frac{8DNR_{\bar{L}}}{\delta_{\bar{L}}}\right)$, we have $$s_1 \leq \frac{\sqrt{\log\left(\frac{8DNR_{\bar{L}}}{\delta_{\bar{L}}}\right)}}{\sqrt{M_{\bar{L}}}N} \leq \frac{\sqrt{\log\left(\frac{8DNR_{\bar{L}}}{\delta_{\bar{L}}}\right)}}{\sqrt{M_{\bar{L}}N}} \leq \frac{\sqrt{\log\left(\frac{8DNR_{\bar{L}}}{\delta_{\bar{L}}}\right)}}{\sqrt{M_{\bar{L}}\abs{\mathcal{M}}}}.$$ Together with the term $s_2$ we obtain the following with probability at least $1 - \delta_{\bar{L}}/4$ for all agents $i \in [N]$:
\begin{equation}
\label{eqn:proof_thm2_1}
  C_{\sigma(i)} (\hat{K}_i^{(\bar{L})}) - C_{\sigma(i)}(K^*_{\sigma(i)}) \leq \frac{2 \max\{1, c_{(p, 9)}D\}}{\sqrt{M_{\bar{L}}\abs{\mathcal{M}_{\sigma(i)}}}} \sqrt{\log\left(\frac{8DNR_{\bar{L}}}{\delta}\right)}.  
\end{equation}

The above holds on the event $E_{\bar{L}}$ conditioned on the event $\mathcal{E}_{\text{Thm1}}$. We have $\mathbb{P}(E_{\bar{L}} \vert \mathcal{E}_{\text{Thm1}}) \geq \left(1 - \sum_{j = 1}^{\bar{L}}\frac{\delta_j}{4} \right) = \left(1 - \sum_{j = 1}^{\bar{L}}\frac{\delta}{8j^2} \right)\geq 1 - \delta/4$. Therefore, $$\mathbb{P}(\mathcal{E}_{E_{\bar{L}}} \cap \mathcal{E}_{\text{Thm1}}) = \mathbb{P}(\mathcal{E}_{E_{\bar{L}}} \vert \mathcal{E}_{\text{Thm1}}) \mathbb{P}(\mathcal{E}_{\text{Thm1}}) \geq (1 - \delta/4)(1 - \delta/2) \geq 1 - (\delta/4 + \delta/2) \geq 1 - \delta.$$

Note that the guarantee in \eqref{eqn:proof_thm2_1} provides a rate $\tilde{O}\left(1/\sqrt{M_{\bar{L}}\abs{\mathcal{M}_{\sigma(i)}}}\right)$. It remains to show that $M_{\bar{L}} = \tilde{\Omega}(T)$. Since $\Delta_l = \Delta_0/4^l$ and $R_{l} \geq 1$ for all $l \in \{1, 2, \ldots, \bar{L}\}$, consider the following as $M_l \leq T$
\begin{align}
    M_{l} \leq T \implies 4^{l} &\leq \frac{T \Delta_0^2}{c_{(p, 10)}\log\left(\frac{8DN}{\delta}\right)} \nonumber\\
    \implies l &\leq \log \left( \frac{T \Delta_0^2}{c_{(p, 10)}\log\left(\frac{8DN}{\delta}\right)}\right) \nonumber\\
    &= \log(c_{(p, 11)}T), \label{eqn:l_bound}
\end{align}
where we defined $c_{(p, 11)} \coloneqq \left( \frac{\Delta_0^2}{c_{(p, 10)}\log\left(\frac{8DN}{\delta}\right)}\right)$. Now, we use the upper bound on $l$ to bound $R_l$ as follows:
\begin{align*}
    R_l &= \frac{4}{\eta \mu}\left( \log\left(\frac{c_{(p, 10)}N\tilde{\Delta}_0^2}{\Delta_0^2}\right) + l\log(4)\right) \\
    & \overset{(a)}{\leq} l \left(\frac{4}{\eta \mu} \left( \log\left(\frac{c_{(p, 10)}N\tilde{\Delta}_0^2}{\Delta_0^2}\right) + \log(4)\right) \right) \\
    &\overset{(b)}{\leq} c_{(p, 12)}\log(c_{(p, 11)}T),
\end{align*}
where $(a)$ follows as $l \geq 1$, and $(b)$ follows from \eqref{eqn:l_bound} with $c_{(p, 12)} \coloneqq \frac{4}{\eta \mu} \left( \log\left(\frac{c_{(p, 10)}N\tilde{\Delta}_0^2}{\Delta_0^2}\right) + \log(4)\right)$. Therefore, the overall sample complexity has the following bound:
\begin{equation}
\label{eqn:T_bound}
\left.
\begin{aligned}
T &= \sum_{l = 1}^{\bar{L}} (2M_lR_l + M_l) \\
&\overset{(a)}{\leq} \sum_{l = 1}^{\bar{L}} (3M_lR_l) \\
& \overset{(b)}{\leq} \sum_{l = 1}^{\bar{L}} 3c_{(p, 12)}\log(c_{(p, 11)}T) \left(\frac{c_{(p, 10)}\log\left(\frac{8DNT}{\delta}\right)}{\Delta_0^2}\right) 4^l \\
& = 4c_{(p, 12)}\log(c_{(p, 11)}T)\left(\frac{c_{(p, 10)}\log\left(\frac{8DNT}{\delta}\right)}{\Delta_0^2}\right) 4^{\bar{L}} \\
&\implies \frac{T}{4c_{(p, 12)}\log(c_{(p, 11)}T)\left(\frac{c_{(p, 10)}\log\left(\frac{8DNT}{\delta}\right)}{\Delta_0^2}\right)} \leq 4^{\bar{L}}. 
\end{aligned}    ~~~ \right\}
\end{equation}

In the above, $(a)$ follows as $R_l \geq 1$ and $(b)$ as we used $R_l \leq T$ in $M_l$. Using the lower bound on $4^{\bar{L}}$ as obtained above in $M_{\bar{L}}$, we obtain $M_{\bar{L}} \geq \frac{T\log\left(\frac{8DNR_{\bar{L}}}{\delta}\right)}{4c_{(p, 12)}\log(c_{(p, 11)}T) \log\left(\frac{8DNT}{\delta}\right)}$. Using this bound in \eqref{eqn:proof_thm2_1}, we have the following with probability $1 - \delta$:
\begin{align}
  C_{\sigma(i)} (\hat{K}_i^{(\bar{L})}) - C_{\sigma(i)}(K^*_{\sigma(i)}) &\leq \frac{4 D\max\{1, c_{(p, 9)}\}\sqrt{c_{(p, 12)}\log(c_{(p, 11)}T) \log\left(\frac{8DNT}{\delta}\right)}}{\sqrt{T\abs{\mathcal{M}_{\sigma(i)}}}} \nonumber\\
  & = \frac{Dc_{(p, 13)}\sqrt{ \log\left(\frac{8DNT}{\delta}\right)}}{\sqrt{T\abs{\mathcal{M}_{\sigma(i)}}}}, \label{eqn:proof_thm2_2}
\end{align}
where we defined $c_{(p, 13)} \coloneqq 4\max\{1, c_{(p, 9)}\}\sqrt{c_{(p, 12)}\log(c_{(p, 11)}T)}$. 

Finally, it remains to show that $\bar{L} > L$ when $T \geq \tilde{O}(1/\Delta^2)$. To ensure, $\bar{L} > L$, consider the number of rollouts required up to $(L+1)$th epoch. From \eqref{eqn:T_bound} with the summation from 1 to $L+1$ we have:
\begin{align*}
\sum_{l = 1}^{L+1} (2M_lR_l + M_l) & \leq 4c_{(p, 12)}\log(c_{(p, 11)}T)\left(\frac{c_{(p, 10)}\log\left(\frac{8DNT}{\delta}\right)}{\Delta_0^2}\right) 4^{L+1}.
\end{align*}
In the above, setting $T \geq $ RHS to ensure $\bar{L} > L$, we have
\begin{align*}
    T  &\geq 4c_{(p, 12)}\log(c_{(p, 11)}T)\left(\frac{c_{(p, 10)}\log\left(\frac{8DNT}{\delta}\right)}{\Delta_0^2}\right)4^{L+1} \\
    &\overset{(a)}{\geq} 4c_{(p, 12)}\log(c_{(p, 11)}T)\left(\frac{c_{(p, 10)}\log\left(\frac{8DNT}{\delta}\right)}{\Delta_0^2}\right)\left(\frac{\Delta_0}{\Delta}\right)^2. 
\end{align*}
In the above, since $L = \min\{l \in 1, 2, \ldots : \Delta_l \leq \Delta/2\}$, $(a)$ follows from the fact that $\Delta_{L+1} = \Delta_0/(2^{L+1}) \leq \Delta/2$. Hence, when $T \geq \tilde{O}(1/\Delta^2)$, we have $\bar{L} > L$. This completes the proof of Theorem~\ref{thm:collaborative_optimization}.

\section{Proof of Corollary \ref{cor:log_complexity}}
\label{app:proof_cor3}
In this section, we analyze the total communication complexity of the \texttt{PCPO} algorithm. In every epoch, each agent communicates with the server once in every iteration of the collaborative policy optimization subroutine, and once to send the local policy to update the neighborhood sets. Hence, the overall communication complexity is $ \sum_{l=1}^{\bar{L}}(R_l + 1).$ Note that $ \sum_{l=1}^{\bar{L}}(R_l + 1) \leq (R_{\bar{L}} + 1) \bar{L}$ since $R_{\bar{L}} = c_{(p, 2)}\log\left(\frac{2^{\bar{L}}c_{(p, 3)}N}{\Delta_{0}^2}\right) \geq c_{(p, 2)}\log\left(\frac{2^lc_{(p, 3)}N}{\Delta_0^2}\right) = R_l$. 

From \eqref{eqn:T_bound}, $\bar{L}$ is logarithmic in $T$. Furthermore, $R_{\bar{L}} = c_{(p, 2)}\log\left(\frac{2^{\bar{L}}c_{(p, 3)}N}{\Delta_0^2}\right)$ is logarithmic in the number of agents $N$ and $T$. Finally, since $T \geq \tilde{O}(1/\Delta^2)$, the overall communication complexity is logarithmic in $T$, $N$ and $1/\Delta$.

\end{document}